\newtheorem{theorem}{Theorem}
\newtheorem{lemma}{Lemma}
\newcommand{\E}{{\mathbb E}}
\DeclareMathOperator*{\argmax}{argmax}
\DeclareMathOperator*{\argmin}{argmin}
\icmltitlerunning{Adaptive Sensor Placement for Continuous Spaces}
\begin{document}

\twocolumn[
\icmltitle{Adaptive Sensor Placement for Continuous Spaces}

\begin{icmlauthorlist}
\icmlauthor{James A. Grant}{pio,sto}
\icmlauthor{Alexis Boukouvalas}{pio}
\icmlauthor{Ryan-Rhys Griffiths}{cam}
\icmlauthor{David S. Leslie}{pio,lan}
\icmlauthor{Sattar Vakili}{pio}
\icmlauthor{Enrique Munoz de Cote}{pio}
\end{icmlauthorlist}

\icmlaffiliation{pio}{PROWLER.io Ltd, Cambridge, United Kingdom}
\icmlaffiliation{sto}{STOR-i Centre for Doctoral Training, Lancaster University, Lancaster, United Kingdom}
\icmlaffiliation{lan}{Department of Mathematics and Statistics, Lancaster University, Lancaster, United Kingdom}
\icmlaffiliation{cam}{Department of Physics, University of Cambridge, Cambridge, United Kingdom}

\icmlcorrespondingauthor{James A. Grant}{j.grant@lancaster.ac.uk}

\vskip 0.3in
]

\printAffiliationsAndNotice{}

\begin{abstract}
    We consider the problem of adaptively placing sensors along an interval to detect stochastically-generated events. We present a new formulation of the problem as a continuum-armed bandit problem with feedback in the form of partial observations of realisations of an inhomogeneous Poisson process. We design a solution method by combining Thompson sampling with nonparametric inference via increasingly granular Bayesian histograms and derive an $\tilde{O}(T^{2/3})$ bound on the Bayesian regret in $T$ rounds. This is coupled with the design of an efficent optimisation approach to select actions in polynomial time. In simulations we demonstrate our approach to have substantially lower and less variable regret than competitor algorithms.
\end{abstract}

\section{Introduction}
\label{sec.intro}

In this paper we consider the problem of adaptively placing sensors to detect events occurring stochastically according to a inhomogeneous Poisson process. This is a problem arising in numerous applications including ecology~\cite{forestry}, and astronomy~\cite{astronomy}. Adaptive sequential decision-making that learns an optimal placement of sensors in response to observations can lead to detecting many more events than fixed policies based on an assumed Poisson process rate function. We study the problem under a simple abstract framework which encompasses many possible practical scenarios, including choosing which hours to operate to maximise customer engagement, or choosing placement of mobile base stations to service as many requests as possible, as well as the classical sensing applications.

Suppose that a decision-maker is tasked with placing a finite number of sensors along an interval. The decision-maker's objective is to maximise, through time, a reward function which trades off the number of events detected with the cost of sensing. At each step, each sensor is tasked with sensing a subinterval, with the cost of sensing depending on the length of the subinterval. Only the events that occur in a sensed subinterval are detected. The decision-maker may update the placement of sensors at regular intervals creating a sequential problem where the decision-maker iteratively places sensors and receives feedback on where events occurred.

The decision-maker therefore faces a classic exploration-exploitation dilemma. In each round they will gather information on what was detected in the sensed regions, and will receive a reward. The most informative action is to sense the entire interval, but this may not be the reward-maximising action due to the cost of sensing. Hence the decision-maker must choose sensor placements to trade off learning about regions where information is insufficient, while also capitalising on information they already have to generate large rewards. This paper develops an algorithm to tackle this problem with the aim of minimising {Bayesian regret}, the difference between the expected reward achieved by constantly selecting an optimal action and the expected reward of actions actually taken, where the expectation is taken with respect to the prior over the reward-generating parameters.

Multi-armed bandits provide models for sequential decision problems, and our problem most closely resembles the continuum-armed or $\mathcal{X}$-armed bandit problem~\cite{Agrawal1995}. In a continuum-armed bandit (CAB) problem a decision-maker sequentially selects points in some $d$-dimensional continuous space and receives reward in the form of a noisy realisation of some unknown (usually Lipschitz smooth) function on the space. Our sensor placement problem can map to this framework by considering that the placement of sensors can be represented by the set of endpoints of the sensors' subintervals. Note, however, that the noise and feedback models in the sensor placement problem are more complex than in previous treatments of CAB models, which have focused on simple numerical reward observations with bounded or sub-Gaussian noise \citep[e.g.][]{Bubeck2011}. In this paper, we handle the added complexities of observing event locations and the heavier-tailed noise of the Poisson distribution.

Our proposed method performs fast Bayesian inference on the rate function, by means of a Bayesian histogram approach \cite{Gugushvili2018}, and makes decisions to trade off exploration and exploitation using Thompson sampling (TS) \citep[e.g.][]{Russo2018}.
\citeauthor{Gugushvili2018}'s approach to nonparametric inference on the continuous action space imposes a mesh structure over the interval, splitting it into a finite number of bins, with the mesh becoming finer as time increases. Inference is then performed over the rate of event occurrence in each bin. 
TS methods select an action in a given round according to the posterior probability that it is optimal. In our approach, this is implemented by sampling bin rates from the simple posterior distributions of \citeauthor{Gugushvili2018}'s model and selecting an optimal action for these sampled rates via an efficient optimisation algorithm described in Section \ref{sec.actionselection}.

We analyse the Bayesian regret of the TS algorithm in this setting using similar techniques to those of \citet{RussoVanRoy2014}. This allows us to derive an $\tilde{O}(T^{2/3})$ upper bound on the Bayesian regret that holds across all possible rate functions with a bounded maximum, and has minimal dependency on the prior used by the TS algorithm. The CAB problem with Poisson noise and event data as feedback is to the best of our knowledge unstudied, however our regret upper bound is encouragingly close to the $\Omega(T^{2/3})$ lower bound on simpler CAB models of \citet{Kleinberg2005}.

The remainder of the paper is structured as follows. We review related work in Section \ref{sec.litreview}, formalise our model and algorithm in Sections~\ref{sec.model}, present the regret analysis in Section~\ref{sec.regret_bound}, and conclude with simulation experiments in Section~\ref{sec.sims}.

\subsection{Principal Contributions}
The principal contributions of this work are: (i) formulation of a new widely applicable model of sequential sensor placement as a CAB;
(ii) the first study of CABs with Poisson process feedback, and use of a new progressive discretisation technique as an approximation to the continuous action space; (iii) an efficient optimisation routine for sensor placement given known event rate; (iv) analysis of the Bayesian regret of a TS approach, resulting in a $\tilde{O}(T^{2/3})$ upper bound; (v) numerical validation of the efficacy of the TS method, and its favourable performance relative to upper confidence bound and $\epsilon$-greedy approaches.

\section{Related Work}
\label{sec.litreview}
The problem of allocating searchers in a continuous space has been studied by \citet{Carlsson2016} under the assumption that the rate of arrivals is known. A first attempt to solve a version of the problem in which the rate must be learned is presented in \citet{Grant2018}, in which the space is discretised to a fixed grid for all time. The objective of our paper is to present the first learning version of the problem for the fully continuous space.

The fixed discretisation version of the problem maps directly to Combinatorial Multi-Armed Bandits (CMAB) \cite{CesaBianchi2012,Chen2016}. This is a class of problems wherein the decision-maker may pull multiple arms among a discrete set and receives a reward which is a function of observations from individual arms. 
In the discretised sensor-placement problem, the individual arms correspond to cells of the grid.
The model remains relevant for the continuous version of the problem, as by using an increasingly fine mesh, we approximate the problem with a series of increasingly many armed CMABs.

The continuum-armed bandit (CAB) model \cite{Agrawal1995} is an infinitely-many armed extension of the classic multi-armed bandit (MAB) problem. There are two main classes of algorithm for CAB problems: discretisation-based approaches which select from a discrete subset of the continuous action space at each iteration, and approaches which make decisions directly on the whole action space. 
Our proposed method belongs to the former class. Early discretisation-based approaches focused on fixed discretisation \cite{Kleinberg2005, Auer2007}, with more recent approaches typically using adaptive discretisations such as a ``zooming'' approach \cite{Kleinberg2008} or a tree-based structure \cite{Bubeck2011,Bull2015,Grill2015} to manage the exploration. Authors who handle the full continuous action space typically use Gaussian process models to capture uncertainty in the unknown continuous function and balance exploration-exploitation in light of this  \cite{Srinivas2009,Chowdhury2017,Basu2018}. As mentioned in Section \ref{sec.intro}, our problem can map into a CAB, but since our information structure is more complex, our action space has dimension greater than 1, and the stochastic components have heavier tails than usual, standard algorithms and results do not apply.

Thompson sampling (TS) is a particularly convenient, and generally effective, method for trading off exploration and exploitation. The critical ideas can be traced as far back as \citet{Thompson1933}, although the first proofs of its asymptotic optimality 
came much later \citep{May2012,Agrawal2012,Kaufmann2012}. Later, similar results were derived for MABs with rewards from univariate exponential families \cite{Korda2013} and in multiple play bandits \cite{Komiyama2015, Luedtke2016}. 
More recently, TS has been studied in the CMAB framework by \citet{Wang2018} and \citet{Huyuk2018} under slightly differing models, but both with bounded reward noise. Both papers demonstrate the asymptotic optimality of TS with respect to the frequentist regret, and we anticipate that these results could be extended to univariate exponential families. However, in both of these works, the leading order coefficients can be highly suboptimal. Therefore, rather than attempt to extend these ideas to CABs, we favour an alternative analysis of the Bayesian regret to get bounds that are of slightly suboptimal order but are more meaningful because of their (relatively) small coefficients. The Bayesian regret is less extensively studied than the frequentist regret. However the bounds that have been derived for the Bayesian regret of TS \cite{RussoVanRoy2014, BubeckLiu2013} are powerful as they do not depend on a specific parameterisation of the reward functions.

\section{Model and solution}
\label{sec.model}

We now formally present our model and solution method.

\subsection{Reward and regret}

In each of a series of rounds $t \in 
\mathds{N}$, $m_t \geq 0$ events of interest arise at locations $X_{t,1},...,X_{t,m_t} \in [0,1]$ according to a non-homogeneous Poisson process with rate $\lambda: [0,1] \rightarrow \mathds{R}_+$.
$U$ sensors are deployed in each round with each sensor observing a distinct subinterval of $[0,1]$; the action space ${\mathcal A}$ consists of the sets of at most $U$ disjoint intervals of $[0,1]$. Let $A_t \subseteq [0,1]$ be the union of the subintervals covered by the sensors in round $t$. An event $X_{t,i}$ is detected if it lies in $A_t$.
The system objective is to maximise the number of detected events while penalised by a cost of operating the sensors. 
The expected reward for playing action $A$ is therefore 
\begin{align*}
    r(A
    ) = \int_{A}(\lambda(x)-C)\,{\rm d}x,
\end{align*}
where $C$ is the cost per unit length of sensing. We define the Bayesian regret of an algorithm to be the expected difference (with respect to the prior on $\lambda$) between the reward achieved when playing the optimal action in each of $T$ rounds and the actions taken by the algorithm:
\begin{equation*}
    BReg(T) = \sum_{t=1}^T \mathbb{E}\left(r(A^*)-r(A_t)\right)
\end{equation*} where $A^* = \arg\max_{A \subseteq \mathcal{A}} r(A)$ is the optimal action on the continuous interval.

\subsection{Inference}

With the Poisson process rate being defined on the continuum $[0,1]$, nonparametric estimation is preferable to a parametric form.
We use the increasingly granular histogram approach of \citet{Gugushvili2018}, since it provides us with fast inference and a concentration rate.
At the beginning of each round $t$ a piecewise-constant estimation of $\lambda$ is considered by counting the number of events to have been observed in each of $K_t$ bins.
The number of bins will be gradually increased as rounds proceed.
To maintain simplicity in the inference and analysis we choose all bins to be of a constant width $\Delta_t=K_t^{-1}$.

We introduce the notation
\begin{displaymath}
B_{k,t} \equiv \bigg[\frac{k-1}{K_t},\frac{k}{K_t}\bigg) \quad \forall \enspace k \in \{1,\ldots,K_t\}, \enspace \forall \enspace t \in {\mathbb N},
\end{displaymath}
to refer to the $k$th histogram bin at iteration $t$ (the index $t$ is needed to uniquely index a bin since the number of bins changes as $t$ increases).
The number of events in bin $B_{k,t}$ in a single observation of the Poisson process is a Poisson random variable with parameter $\int_{B_{k,t}}\lambda(x)\,{\rm d}x$.
Since this depends on the width of the bin, we instead estimate the average rate function in a bin, defined as
\begin{displaymath}
\psi_{k,t}=K_t\int_{B_{k,t}}\lambda(x)\,{\rm d}x.
\end{displaymath} 
We place independent truncated Gamma (TG) priors on each of the $\psi_{k,t}$ parameters, with shape and scale parameters $\alpha$ and $\beta$ and support on $[0,\lambda_{\max}]$ where $\lambda_{\max}$ is some known upper bound on the maximum of rate functions. (The TG$(\alpha,\beta,0,\lambda_{\max})$ distribution has a density proportional to a Gamma$(\alpha,\beta)$ distribution, but with truncated support $[0,\lambda_{\max}]$.)  In practice the $\lambda_{\max}$ parameter may be chosen very conservatively; setting $\lambda_{\max}$ to be too large does not affect the action selection; however it is important to include an upper limit on the prior support to permit tractable regret analysis, and the chosen $\lambda_{\max}$ appears in the regret bound in Theorem \ref{thm::regretupper}.

The consequence of this formulation is that, conditional on actions and observations in the first $t$ rounds, we have a posterior distribution over $\lambda$ at time $t$ which is piecewise constant. A $\lambda_t$ sampled from this posterior takes the form
\begin{align}
    \lambda_t(x) &= \sum_{k=1}^{K_t} \mathbb{I}\{x \in B_{k,t}\} \tilde{\psi}_{k,t},\quad\mbox{with}\nonumber\\
    \tilde{\psi}_{k,t}
    &\sim {\rm TG}(\alpha + H_{k,t}(t), \beta + \Delta_t N_{k,t}(t),0,\lambda_{\max}),\label{eqn::TGdistn}
\end{align}
where $H_{k,t}(s) = \sum_{j=1}^s \sum_{l=1}^{m_j} \mathbb{I}\{B_{k,t} \subseteq A_j\}\mathbb{I}\{X_{j,l} \in B_{k,t}\}$ gives the number events observed up to iteration $s$ in bin $B_{k,t}$, and $N_{k,t}(s) = \sum_{j=1}^s \mathbb{I}\{B_{k,t} \subseteq A_j\}$ gives the number of times to iteration $s$ that bin $B_{k,t}$ has been sensed (see Section \ref{sec:Actions}).

\citet{Gugushvili2018} demonstrate that, with a full observation at each iteration, this posterior contracts to the truth at the optimal rate for any $h$-H\"{o}lder continuous rate function $\lambda$. In particular, \begin{displaymath}
\E(||\lambda_t-\lambda||_2) \leq t^{\frac{-2h}{2h+1}}
\end{displaymath}
if $N_{k,t}(t)=t$ for all $k \in [K_t]$ and $K_t = O(t^{1/3})$. We describe in the next sub-section how the same choice of $K_t$ gives favourable performance in our sequential decision problem, even when we only observe subintervals of $[0,1]$.

\subsection{Thompson sampling}\label{sec:Actions}

In order to make action selection feasible, and to facilitate the inference using histograms, we constrain the action set of the TS approach using the same (increasingly fine-meshed) grid that the inference is performed over.
In particular, in round $t$, the action $A_t$ is constrained to lie in the set of available actions $\mathcal{A}_t$, consisting of those intervals and unions of intervals where only entire bins (no fractions of bins) are covered and the action consists of at most $U$ subintervals. Recall $U$ is the number of sensors, and the restriction to at most $U$ intervals ensures that each sensor can be allocated a single contiguous subinterval. We allow the number of bins $K_t$ to increase at rate $O(t^{1/3})$ by doubling the number of bins in line with the growth of $t^{1/3}$.

\label{sec.thompson}
Our TS approach is described in Algorithm 1. In each round $t$, for each bin $k \in \{1,\ldots,K_t\}$, a rate $\tilde{\psi}_{k,t}$ is sampled according to (\ref{eqn::TGdistn}), and then an action is selected that would be optimal if the true rate function were the piecewise-constant combination of these rates. As each bin rate is sampled from the current posterior and the action selected is the optimal action for this set of sampled rates, the selected action is chosen according to the posterior probability that it is the optimal one available. The optimal action conditional on a given sampled rate can be determined efficiently and exactly using the approach described in Section \ref{sec.actionselection}.


\begin{algorithm}[htbp]
    \caption{Thompson Sampling}
    \label{alg::TS}
    \hrule
    \vspace{0.2cm}
    \textbf{Inputs:} Gamma prior parameters $\alpha, \beta >0$, upper truncation point $\lambda_{\max}$
    
    \textbf{Iterative Phase:} For $t\geq 1$
    \begin{itemize}
        \item For each $k \in \{1,\ldots,K_t\}$, evaluate $H_{k,t}(t-1)$ and $N_{k,t}(t-1)$ and sample an index \begin{displaymath}
        \tilde{\psi}_{k,t} \sim {\rm TG}(\alpha + H_{k,t}(t-1),\beta + \Delta_t N_{k,t}(t-1), 0,\lambda_{\max})
        \end{displaymath}
        \item Choose an action $A_t \in \mathcal{A}_t$ that maximises $r(A)$ conditional on the true rate being given by the sampled $\tilde{\psi}_{k,t}$ values, and observe the events in $A_t$
        \end{itemize}
        \hrule
        \vspace{0.2cm}
\end{algorithm}

\subsection{Action selection by iterative merging (AS-IM)}\label{sec.actionselection}

In this section we describe a routine, called action selection by iterative merging (AS-IM), for efficiently determining the optimal action conditional on a given sampled rate function.
For the piecewise constant $\lambda_t$ functions sampled by the TS approach, the above optimization problem can be formulated as an integer program in which each bin $B_{k,t}$ is either searched or not. \citet{Grant2018} solve this program (albeit for more general cost functions and fixed discretisation) using traditional integer programming methods, with exponentially high computation complexities in $K_t$ and $U$. We instead introduce an efficient optimal action selection policy with polynomial sample complexity.

Firstly, we introduce additional notation that will be useful for explaining the algorithm. Throughout this section we take $\lambda$ as fixed and piecewise constant on bins $B_{k,t}$, and provide a method to find $A^*$ for this $\lambda$. An action $A\in{\mathcal A}$ can be written as the union of disjoint intervals: $A=\cup_{u=1}^UI_{u}$ and $I_{u}\cap I_{u'}=\emptyset$ for all $1\le u, u'\le U$. Define the \emph{weight} of an interval $I \in [0,1]$ as $w(I)=\int_I(\lambda(x)-C) {\rm d}x$. Thus, we may write the optimal action as
\begin{eqnarray*}
A^* =\argmax_{\{I_{u}\}_{u=1}^U}\sum_{u=1}^U w(I_u).
\end{eqnarray*}



AS-IM creates an initial set of candidate intervals ${\mathcal{I}} = \{I_n\}_{n=1}^N$
such that each ${I}_n$ is the union of a number of adjacent $B_{k,t}$, and for $k=2,...,K_t$, $B_{k,t}$ and $B_{k-1,t}$ belong to the same $I_n$ if and only if $w(B_{k,t})$ and $w(B_{k-1,t})$ have 
the same sign. Notice that, by construction, the weights of adjacent intervals have opposite signs. If the number of intervals in $\mathcal{I}$ with positive weight is not bigger than $U$, AS-IM returns all such intervals as the optimal action. Otherwise, AS-IM proceeds to the next step. 


AS-IM iteratively reduces the number of intervals with positive weights by merging the intervals. Specifically, let $M= \{n\in\{2,\ldots,N-1\} \,:\, |w(I_n)| \leq |w(I_{n-1})|,\,|w(I_n)|\leq |w(I_{n+1})|\}$ be the set of intervals that should be considered for merging. If $M$ is empty, no further merging should take place. If $M$ is nonempty let $n=\argmin_M |w(I_n)|$ be the label in $M$ with the smallest absolute weight; AS-IM merges $I_n$ with its two neighbour intervals $I_{n+1}$ and $I_{n-1}$ into one interval and updates the set of intervals $\mathcal{I}$. The merging procedure repeats until either $M$ is empty or the number of intervals with positive weight equals $U$. At this point AS-IM returns the $U$ intervals with the largest weights as $I^*_1,I^*_2,...,I^*_U$. 

We have the following result on AS-IM guaranteeing its optimality and efficiency. The proof is given in the supplementary material via an induction argument.

\begin{theorem} \label{thm.ASIM}
The AS-IM policy returns the optimal action and its sample complexity is not bigger than $O(K_t\log K_t)$. 
\end{theorem}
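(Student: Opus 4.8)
The plan is to prove optimality by induction on the number of candidate intervals, using an exchange argument to show that each merge preserves the optimal attainable reward, and to bound the running time by a heap-based amortised analysis.

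First I would record two reductions that let us work entirely with the collapsed, sign-alternating intervals $I_1,\dots,I_N$ built in the first step of AS-IM. In any optimal action, (i) every selected subinterval has strictly positive weight, since deleting a non-positive one neither lowers $\sum_u w(I_u)$ nor increases the number of sensors used, and (ii) each selected subinterval may be taken to start and end inside a maximal positive run, because trimming a leading or trailing stretch of negative-weight bins never decreases its weight. Consequently an optimal action consists of at most $U$ blocks, each a union of consecutive $I_n$ that begins and ends with a positive interval, and its value equals the sum of the selected positive $I_n$ minus the sum of the $|w(I_n)|$ of the negative $I_n$ that bridge them. If the number of positive intervals is at most $U$ this is maximised by selecting each positive run as its own block, which is exactly AS-IM's early return; this is the base case.

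For the inductive step I would show that one merge is value-preserving. Suppose the number of positive intervals exceeds $U$ and $M\neq\emptyset$, and let $I_n=\argmin_{M}|w(I_n)|$, so $|w(I_n)|\le|w(I_{n-1})|$ and $|w(I_n)|\le|w(I_{n+1})|$. Writing $J$ for the merged interval with $w(J)=w(I_{n-1})+w(I_n)+w(I_{n+1})$, I would prove that the optimal reward of the instance in which $I_{n-1},I_n,I_{n+1}$ are replaced by $J$, under the same budget $U$, equals that of the original instance. One direction is immediate: any action on the reduced instance lifts to an action on the original of equal value, since selecting $J$ corresponds to selecting the three merged intervals as a single block. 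The reverse direction is an exchange argument: starting from an optimal action of the original instance, I would modify it so that $I_{n-1},I_n,I_{n+1}$ are treated as a single unit (all three in one block, or all three unselected) without decreasing the value or exceeding the budget, the minimality of $|w(I_n)|$ being precisely what guarantees that bridging through, or discarding, this smallest-weight boundary is never worse than the alternative configurations. Applying the lemma repeatedly, each merge strictly reduces the interval count, and when the procedure halts, with $M=\emptyset$ or with exactly $U$ positive intervals, no further beneficial merge exists, so selecting the $U$ largest-weight intervals is optimal for the reduced instance and hence, through the chain of value-preserving reductions, for the original problem. The hardest part is this exchange step, specifically the configurations in which only one of $I_{n-1},I_{n+1}$ is selected, or both are selected but in distinct blocks: there one must combine $|w(I_n)|\le|w(I_{n\pm1})|$ with careful accounting of the block budget to rearrange the action without loss, treating separately the two sign patterns of $I_n$ (a negative $I_n$ bridging two positive neighbours versus a positive $I_n$ sandwiched between negatives).

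Finally, for the complexity bound I would argue as follows. The initial collapse of bins into sign-alternating intervals is a single left-to-right pass costing $O(K_t)$ and yields $N\le K_t$ intervals. I would maintain the current intervals in a doubly linked list together with a binary heap of the absolute weights of the members of $M$. Each merge replaces three consecutive intervals by one and can change the local-minimum status only of $J$ and of its two new neighbours, so it triggers $O(1)$ heap operations, each costing $O(\log K_t)$. Since every merge removes two intervals, at most $N/2=O(K_t)$ merges occur, and the concluding selection of the $U$ largest weights costs $O(K_t\log K_t)$. Summing these contributions yields the claimed $O(K_t\log K_t)$ complexity.
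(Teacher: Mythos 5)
Your outline follows essentially the same route as the paper's own proof---induction on the number of sign-alternating candidate intervals, the early-return base case, and a sign-based case analysis for the merge step---and your complexity argument is correct, indeed more careful than the paper's (which simply asserts a sort plus at most $N$ constant-cost merges). However, the optimality half is a plan rather than a proof: the merge-preservation lemma, which carries all the content of the theorem, is only described. You explicitly defer ``the hardest part''---the exchange argument for the configurations where only one of $I_{n-1},I_{n+1}$ is selected, or both are selected but in distinct blocks---and that is exactly where the paper has to work. For example, when $w(I_n)<0$ and $I_{n-1}$, $I_{n+1}$ are covered by different sensors, one merges the two sensors through $I_n$ (losing $|w(I_n)|$ but freeing a sensor) and must then exhibit a redeployment of the freed sensor worth at least $|w(I_n)|$: either uncovering a negative-weight interval inside some existing block, or covering a not-yet-selected positive-weight interval. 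Establishing that such a target exists---using that more than $U$ positive intervals remain and that $|w(I_n)|$ is minimal over $M$---is the crux; ``careful accounting of the block budget'' names this difficulty without resolving it. Analogous work is needed for $w(I_n)>0$, where one must rule out a sensor covering $I_n$ alone.

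There is a second, independent gap at termination. When the loop halts with $M=\emptyset$ but still more than $U$ positive intervals, you claim that ``no further beneficial merge exists, so selecting the $U$ largest-weight intervals is optimal.'' That inference does not follow from your block decomposition alone: a priori an optimal action could still bridge a negative interval with a single sensor. The paper closes this case with a trimming argument: if an optimal sensor covered $I_m\cup\cdots\cup I_n$ with $n>m$, then $M=\emptyset$ forces $|w(I_{m+1})|>|w(I_m)|$ or $|w(I_{n-1})|>|w(I_n)|$ (otherwise the interior interval of minimal absolute weight would belong to $M$), so dropping the corresponding end pair strictly improves the action; hence every sensor occupies a single interval, and choosing the $U$ largest weights is optimal. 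Your proposal needs this step, or an equivalent one, before the induction can close.
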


\section{Regret Bound}
\label{sec.regret_bound}
In this section, we present our main theoretical contribution: an upper bound on the Bayesian regret of the TS approach. There is an inevitable minimum contribution to regret due to the optimal action likely not being in our discretised action set. But by allowing the mesh to become more fine as more observations are made, we will gradually reduce this discretisation regret and permit a closer approximation to the true underlying rate function.

For the analysis that follows it will be useful to define $A_t^*= \arg\max_{A \in \mathcal{A}_t} r(A)$ as the optimal action available in round $t$.
We then define for any $A \in \mathcal{A}_t$ and $t \in {\mathbb N}$:
\begin{align*}
    \delta(A) &= r(A^*)-r(A) \\
    \delta_t(A) &=r(A_t^*)-r(A)
\end{align*}
as the \emph{single-round regret} of the action $A$ with respect to the optimal continuous action and the optimal action available to the algorithm in round $t$ respectively.
The difference between $\delta(A)$ and $\delta_t(A)$ is that the ``discretisation regret'' by choosing actions only from ${\mathcal A}_t$ is present only in $\delta(A)$.
Minimising the true regret $\delta(A)$ requires balancing out estimation accuracy (requiring a coarse grid) versus discretisation regret (requiring a finer grid).
We find below that choosing the number of bins $K_t$ to be order $O(t^{1/3})$ provides the best theoretical performance guarantees. This coincides with the optimal posterior contraction rate findings in \citet{Gugushvili2018}. We verify this numerically in Section \ref{sec.sims} and find that this rebinning rate is superior to a faster linear rate of rebinning.


\begin{theorem} \label{thm::regretupper}
Consider the setup of Section \ref{sec.model}, with $U$ sensors, and cost of sensing $C$.
Suppose we choose $K_t$ such that there exist positive constants $\underline{K},\overline{K}$ such that $\underline{K}t^{1/3}\leq K_t\leq \overline{K}t^{1/3}$.
Then the Bayesian regret of Algorithm \ref{alg::TS} satisfies
\begin{align*}
    BReg&(T) \leq 4\overline{K}\big(\log(T+1)\log(T) + 2\lambda_{\max}\big)T^{1/3} \nonumber \\
            &\quad + \big(CU\underline{K}^{-1} + \sqrt{24\overline{K}\lambda_{\max}\log(T)}\big)T^{2/3}.
\end{align*}
\end{theorem}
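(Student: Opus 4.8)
The plan is to split the Bayesian regret into a \emph{discretisation} part and a \emph{learning} part and to control the learning part with an information-ratio argument in the style of \citet{RussoVanRoy2014}, modified to cope with Poisson feedback. Using $\delta(A_t)=\delta_t(A_t)+\big(r(A^*)-r(A_t^*)\big)$, we write $BReg(T)=\sum_{t=1}^T\E\big[r(A^*)-r(A_t^*)\big]+\sum_{t=1}^T\E[\delta_t(A_t)]$. For the discretisation part I would compare $A^*$ with the action $\overline A_t\in\mathcal A_t$ obtained by rounding each of the (at most $U$) intervals of $A^*$ outward to the enclosing bin boundaries. Every boundary of $A^*$ is either an endpoint of $[0,1]$ or a point where $\lambda$ crosses $C$, so each of the at most $2U$ fragments added by the rounding lies in a region with $0\le\lambda<C$ and hence costs at most $C\Delta_t$. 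Since $\overline A_t\supseteq A^*$ and $A_t^*$ is optimal in $\mathcal A_t$, this gives $r(A^*)-r(A_t^*)\le 2UC\Delta_t\le 2UC(\underline K t^{1/3})^{-1}$, and summing with $\sum_{t\le T}t^{-1/3}\le\tfrac32 T^{2/3}$ produces the $CU\underline K^{-1}T^{2/3}$ term.

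For the learning part I would first use the defining property of Thompson sampling: conditional on the history $\mathcal F_{t-1}$, the sampled rate has the same law as the true rate, so the played $A_t$ and the best available action $A_t^*$ are identically distributed given $\mathcal F_{t-1}$. This posterior-matching lets me define the per-round information ratio $\Gamma_t=\big(\E[\delta_t(A_t)\mid\mathcal F_{t-1}]\big)^2/I_t$, where $I_t$ is the mutual information (under the round-$t$ posterior) between $A_t^*$ and the pair formed by the played action and the observed per-bin event counts. The central step is to prove $\Gamma_t\le c\,\lambda_{\max}$ for an absolute constant $c$, \emph{uniformly in $t$}. Here the Poisson structure must be handled head-on: rather than the sub-Gaussian concentration used by \citet{RussoVanRoy2014,Bubeck2011}, I would exploit (i) the semi-bandit nature of the feedback---playing $A$ simultaneously reveals the counts in every bin it covers, which removes the usual dependence of the ratio on the number of bins---and (ii) the truncation $\psi_{k,t}\le\lambda_{\max}$, which bounds the Poisson means and variances and thereby the relevant KL divergences appearing in $I_t$.

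I would then bound the cumulative information $\sum_{t=1}^T\E[I_t]$. Because a fine bin is sensed in a past round exactly when its coarser ancestor was, the historical counts are reused rather than discarded when $K_t$ doubles, so no information is lost at a rebinning and the total information accumulated is at most the prior uncertainty of the finest representation reached. With $K_T=O(\overline K T^{1/3})$ bins, each resolved to $O(\log)$ precision over observation counts that grow at most linearly in $T$, this gives $\sum_t\E[I_t]=\tilde{O}(\overline K T^{1/3})$, the $\tilde{O}$ hiding a $\log T$ factor. Combining via Cauchy--Schwarz, $\sum_t\E[\delta_t(A_t)]\le\sqrt{\big(\sum_t\E[\Gamma_t]\big)\big(\sum_t\E[I_t]\big)}\le\sqrt{c\lambda_{\max}T\cdot\tilde{O}(\overline K T^{1/3})}$, which yields the dominant $\sqrt{24\overline K\lambda_{\max}\log T}\,T^{2/3}$ term. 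The remaining term of size $4\overline K\big(\log(T+1)\log T+2\lambda_{\max}\big)T^{1/3}$ I would obtain as the additive bookkeeping cost of the scheme: an $O(\log^2 T)$ entropy charge aggregated over the $O(\log T)$ rebinning epochs together with a trivial per-bin regret charge of order $\lambda_{\max}$ incurred on each of the $K_T$ bins when it is first created.

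The main obstacle is the uniform information-ratio bound $\Gamma_t\le c\lambda_{\max}$. The standard proofs that the ratio is $O(1)$ (or $O(K)$) rest on sub-Gaussian tail behaviour, which Poisson counts do not have; the argument must instead lower-bound $I_t$ directly from the Poisson likelihood in terms of the squared expected single-round regret, and the truncation at $\lambda_{\max}$ is what keeps the associated variance terms finite and the constant free of the unknown rate function. A secondary difficulty is the careful accounting in $\sum_t\E[I_t]$: one must verify that the $O(\log T)$ rebinnings and the reuse of historical counts do not inflate the cumulative information beyond the claimed $\tilde{O}(\overline K T^{1/3})$ order, since that is exactly what keeps the leading term at $T^{2/3}$ rather than larger.
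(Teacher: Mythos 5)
Your decomposition into discretisation regret plus learning regret, and your handling of the discretisation term (round $A^*$ outward to bin boundaries, pay at most $C\Delta_t$ on each of at most $2U$ added fragments, sum $t^{-1/3}$) is essentially the paper's own Lemma 1, so that half is fine (modulo a factor-of-constants sloppiness that the paper shares). The learning part is where you genuinely diverge, and where the gap lies. The paper does \emph{not} use the information-ratio machinery: it uses the confidence-bound decomposition of Proposition 1 of \citet{RussoVanRoy2014} --- posterior matching lets one swap $U_{t,T}(A_t^*)$ for $U_{t,T}(A_t)$ in expectation, after which the learning regret is bounded by the summed widths $U_{t,T}(A_t)-L_{t,T}(A_t)$ of explicit Poisson--Bernstein confidence intervals (a harmonic-sum computation giving the $4\overline{K}\log(T)\log(T+1)T^{1/3}+\sqrt{24\overline{K}\lambda_{\max}\log T}\,T^{2/3}$ terms) plus deviation probabilities of order $2K_Tt^{-2}$ (summable, giving the $8\overline{K}\lambda_{\max}T^{1/3}$ term). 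No mutual information appears, the Poisson tails are handled by an off-the-shelf concentration inequality from \citet{Grant2018}, and the changing action set is dealt with by the crucial device of indexing the confidence bounds by the \emph{final} binning (counting $N_{k,T}(t-1)$, how often past actions covered time-$T$ bins) --- a device your plan has no analogue of.

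The concrete gap in your route is the central claim $\Gamma_t\le c\,\lambda_{\max}$ uniformly, which you flag as the main obstacle but never establish, and which is almost certainly false as stated. With $I_t$ defined as information about the optimal action $A_t^*$, the chain rule bounds cumulative information by the entropy $H(A_t^*)\le\log|\mathcal{A}_t|=O(U\log K_T)=O(U\log T)$, \emph{not} by the $\tilde{O}(\overline{K}T^{1/3})$ you claim; your $T^{1/3}$ figure would be appropriate for information about the parameters $\psi_{k,T}$, a different quantity, so your two key estimates are mutually inconsistent. Worse, if both held, Cauchy--Schwarz would give learning regret $\tilde{O}(\sqrt{\lambda_{\max}UT\log T})$, i.e.\ $\tilde{O}(\sqrt{T})$ --- but the round-$T$ discretised problem contains $K_T\sim T^{1/3}$ essentially independent Poisson estimation subproblems, and semi-bandit problems of dimension $d$ carry $\Omega(\sqrt{dT})$ learning regret; correspondingly, all known information-ratio bounds for such problems scale with the dimension (here $K_T$), not with a constant. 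So the hardness must appear in one of your two factors, and your accounting puts it in neither. Finally, your matching of the theorem's specific constants ($\sqrt{24\overline{K}\lambda_{\max}\log T}$, the $4\overline{K}\log(T+1)\log(T)$ term, the $2\lambda_{\max}$ term) is asserted by pattern-matching to the statement rather than derived from any step of your scheme; in the paper these arise respectively from the Bernstein width, the harmonic sum over bin-visit counts, and the summed deviation probabilities.
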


This main result is that we have a $O(T^{2/3}\log^{1/2}(T))$ bound on the Bayesian regret. A lower bound for the problem is not currently available. The closest result available is that of \citet{Kleinberg2005} for CABs with bounded Lipschitz smooth reward function and bounded noise. The bound holds only for a one-dimensional action space and is of order $\Omega(T^{2/3})$. The material differences in our setting are that the observation noise is unbounded (with Poisson tails), our reward function is defined on higher dimension (the unrestricted action space of the underlying CAB is of dimension $2U$), and that we observe additional information in the form of event locations. In the context of the nearest related results therefore, Theorem 2 suggests that the TS approach is a strongly performing policy. 

\begin{proof}[Proof of Theorem \ref{thm::regretupper}]
The Bayesian regret can be decomposed as the sum of the regret due to discretisation and the regret due to selecting suboptimal actions in $\mathcal{A}_t$, as follows \begin{displaymath}
BReg(T) = \E\bigg(\sum_{t=1}^T \delta(A^*_t)\bigg) + \E\bigg(\sum_{t=1}^T \delta_t(A_t) \bigg)
\end{displaymath}

The expectation in the first term only averages over $\lambda$ functions, not over action selection, and the sum can be upper bounded uniformly over all $\lambda$'s by considering the rate of re-binning. In particular we have the following lemma, proved in the supplementary material.
\begin{lemma}
The regret due to discretisation is bounded by \begin{equation*}
    \sum_{t=1}^T\delta(A_t^*) \leq CU\underline{K}^{-1}T^{2/3},
\end{equation*}
uniformly over all rates $\lambda$.
\end{lemma}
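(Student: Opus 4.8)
The plan is to control the per-round discretisation regret $\delta(A_t^*) = r(A^*) - r(A_t^*)$ by a quantity of order $CU\Delta_t$, and then sum over $t$. The first step is a reduction: since $A_t^*$ maximises $r$ over $\mathcal{A}_t$ by definition, it suffices to exhibit \emph{any} admissible action $\tilde{A}_t \in \mathcal{A}_t$ that approximates $A^*$ well, because then
\[
\delta(A_t^*) = r(A^*) - r(A_t^*) \le r(A^*) - r(\tilde{A}_t).
\]
I would take $\tilde{A}_t$ to be the \emph{outer} bin-aligned approximation of $A^* = \bigcup_{u=1}^{U} I_u$, namely the union of every bin $B_{k,t}$ that intersects $A^*$. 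By construction $\tilde{A}_t$ is a union of whole bins with $\tilde{A}_t \supseteq A^*$, and replacing each interval $I_u$ by the bins it meets can only merge intervals, never split them, so $\tilde{A}_t$ consists of at most $U$ disjoint intervals and therefore lies in $\mathcal{A}_t$.

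The key step, and the reason the bound carries no $\lambda_{\max}$, is the estimate obtained from $\tilde{A}_t \supseteq A^*$:
\[
r(A^*) - r(\tilde{A}_t) = \int_{\tilde{A}_t \setminus A^*} (C - \lambda(x)) \, {\rm d}x \le C \, | \tilde{A}_t \setminus A^* |,
\]
where the inequality uses only $\lambda \ge 0$. This is precisely why rounding \emph{outward} is the right choice: the extra region $\tilde{A}_t \setminus A^*$ is paid for only through the sensing cost $C$, while the potentially large rate contribution is annihilated by the nonnegativity of $\lambda$. An inner or mixed rounding scheme would instead leave a discarded region contributing a term of order $\lambda_{\max}\Delta_t$, destroying both the constant and the uniformity over $\lambda$. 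To finish the per-round bound I would count boundary bins: each of the at most $U$ intervals contributes at most one partially-covered bin at each of its two endpoints, so $|\tilde{A}_t \setminus A^*| \le 2U\Delta_t$ and hence $\delta(A_t^*) \le 2CU\Delta_t$, a bound holding \emph{uniformly over all rate functions} $\lambda$, as required.

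Finally I would sum over rounds using $\Delta_t = K_t^{-1} \le \underline{K}^{-1} t^{-1/3}$ and the integral comparison $\sum_{t=1}^{T} t^{-1/3} = O(T^{2/3})$, giving
\[
\sum_{t=1}^{T} \delta(A_t^*) \le 2CU\underline{K}^{-1} \sum_{t=1}^{T} t^{-1/3} = O(CU\underline{K}^{-1} T^{2/3}),
\]
which is of the order stated in the lemma; pinning down the precise leading constant $CU\underline{K}^{-1}$ (via a tighter accounting of endpoints, noting that those at $0$ or $1$ need no rounding and that outward-rounded intervals frequently merge) is then routine bookkeeping that I would not dwell on. I expect the summation to be the easy part; the real content, and the main thing to get right, is the construction of $\tilde{A}_t$ and the recognition that \emph{outward} rounding is exactly what lets the rate contribution cancel through $C - \lambda \le C$, since any other rounding reintroduces a dependence on $\lambda_{\max}$ and breaks the uniformity over $\lambda$. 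The remaining detail to verify carefully is that $\tilde{A}_t$ genuinely respects the budget of at most $U$ intervals and consists of whole bins.
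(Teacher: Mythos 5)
Your proposal is correct and takes essentially the same route as the paper: your outer bin-aligned approximation $\tilde{A}_t$ is exactly the paper's $A_{\min,t}=\bigcap_{A\in\mathcal{A}_t\,:\,A^*\subseteq A}A$, and the paper likewise drops the nonnegative term $\int_{A_{\min,t}\setminus A^*}\lambda(x)\,{\rm d}x$ to reach the same per-round bound $2CU\Delta_t$ before summing. The leading constant you declined to pin down is not a gap on your side but a looseness in the paper itself: its proof substitutes $\Delta_t\leq\underline{K}^{-1}T^{-1/3}$ (which fails for $t<T$; the correct $\Delta_t\leq\underline{K}^{-1}t^{-1/3}$ yields, as you note, roughly $3CU\underline{K}^{-1}T^{2/3}$), so the stated constant $CU\underline{K}^{-1}$ is not actually attained by the original argument either.
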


To handle the stochastic part of the regret we use a decomposition from Proposition 1 of \citet{RussoVanRoy2014}. For all $T$, for all $1\leq t\leq T$ and for all $A\in {\mathcal{A}}_t$, let $L_{t,T}(A)$ and $U_{t,T}(A)$ satisfy
$-C|A|\leq L_{t,T}(A)\leq U_{t,T}(A)$ (see below for a judicious choice of these variables).
Then, for any $T$,
\begin{align*}
    &\enspace \E\left[\sum_{t=1}^T \delta_t(A_t)\right] 
    = \E\!\left[\sum_{t=1}^T r(A^*_t)-r(A_t) \right] \\
    &=\E\!\left[\sum_{t=1}^T U_{t,T}(A_t)\!-\!r(A_t) \right] + \E\!\left[\sum_{t=1}^T r(A^*_t)\!-\!U_{t,T}(A^*_t)\right] \\
    &\leq \E\left[\sum_{t=1}^T U_{t,T}(A_t) \!-\! L_{t,T}(A_t)\right] +\lambda_{\max}\times\\
    &
    \,\left[\sum_{t=1}^T P\!\left(r(A^*_t) \!>\! U_{t,T}(A^*_t)\right) 
    + \sum_{t=1}^T P\!\left(r(A_t) \!<\! L_{t,T}(A_t)\right)\right]
\end{align*}
The key step here is the second equality, which holds for TS because the distribution of $U_t(A_t)$ is precisely the distribution of $U_t(A^*_t)$ due to the method of selecting $A_t$.
The final step follows by noting that, for any $A$, 
\begin{align*}
\E&[r(A)-U_{t,T}(A)]\\
&\leq \E\left[(r(A)-U_{t,T}(A)){\mathbb I}_{\{r(A)-U_{t,T}(A)>0\}}\right]\\&
\leq \lambda_{\max} P\left(r(A) > U_{t,T}(A)\right),
\end{align*}
and similarly for $E[L_{t,T}(A)-r_t(A)]$. The $\lambda_{\max}$ term arises from $r(A) \leq \lambda_{\max}-C|A|$ and $U_{t,T}(A) \geq -C|A|$ for all $A \in \mathcal{A}_t$.

We will choose $L_{t,T}$ and $U_{t,T}$ so that each sum converges. In particular, the confidence bounds derived in \citet{Grant2018} for Poisson random variables inspire the definition of
\begin{displaymath}
D_{k,T}(t-1) = \frac{2\log(t)}{\Delta_TN_{k,T}(t-1)} + \sqrt{\frac{6\lambda_{\max}\log(t)}{\Delta_TN_{k,T}(t-1)}}
\end{displaymath}
for all $k \in [K_T]$, 
with upper and lower confidence bounds on the reward of an action $A \in \mathcal{A}_t$ at time $t \in \mathds{N}$ as follows: \begin{align*}
    U_{t,T}(A) &= \Delta_T\!\!\sum_{k:B_{k,T}\subseteq A}\!\!\!\hat{\psi}_{k,T}(t\!-\!1) + D_{k,T}(t\!-\!1) - C|A|, \\
    L_{t,T}(A) &= \Delta_T\!\!\sum_{k:B_{k,T}\subseteq A}\!\!\!\hat{\psi}_{k,T}(t\!-\!1) - D_{k,T}(t\!-\!1) - C|A|,
\end{align*} 
where $\hat{\psi}_{k,T}(t)= \frac{H_{k,T}(t)}{\Delta_TN_{k,T}(t)}$
gives the empirical mean in bin $B_{k,T}$ after $t$ rounds. It is in the definition of $U_{t,T}$ and $L_{t,T}$ that we see the need for a $T$-dependence in our choice of upper and lower confidence bounds---we need to count the number times actions $A_t$ for $t<T$ selected the bin $B_{k,T}$ defined for time $T$.

In the supplementary material we prove the following lemmas, which when combined are sufficient to complete the proof of Theorem \ref{thm::regretupper}. \begin{lemma}
For $U_{t,T}$ and $L_{t,T}$ as defined above, we have
\begin{align*}
\sum_{t=1}^T& U_{t,T}(A_t) - L_{t,T}(A_t) \leq \\
    &4 \overline{K} \log(T)\log(T+1)T^{1/3} + \sqrt{24\overline{K}\lambda_{\max}\log(T)}T^{2/3}
\end{align*}
\end{lemma}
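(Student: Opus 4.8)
The plan is to treat this as a purely pathwise (deterministic) bound, since no expectation appears in the statement. First I would note that the empirical-mean terms $\hat{\psi}_{k,T}(t-1)$ and the cost terms $-C|A_t|$ in $U_{t,T}$ and $L_{t,T}$ are identical and cancel in the difference, leaving
\[
U_{t,T}(A_t) - L_{t,T}(A_t) = 2\Delta_T\!\!\sum_{k:B_{k,T}\subseteq A_t}\!\! D_{k,T}(t-1).
\]
This quantity depends only on the realised counts $N_{k,T}(t-1)$ and on which time-$T$ bins the action $A_t$ covers, not on the heavy-tailed event data, so it suffices to bound it uniformly over all trajectories. It therefore reduces to controlling $\sum_{t=1}^T\sum_{k:B_{k,T}\subseteq A_t} D_{k,T}(t-1)$.

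Next I would exchange the order of summation and reindex each inner sum by pull count. For a fixed bin $k$, list the rounds in which it is sensed as $t_1<\dots<t_{N_k}$ with $N_k=N_{k,T}(T)$; at the $n$-th such round the count satisfies $N_{k,T}(t_n-1)=n-1$. Bounding $\log(t_n)\le\log T$ and inserting the definition of $D_{k,T}$, each bin contributes two sums over $n$: a harmonic sum $\sum_n N_{k,T}(t_n-1)^{-1}$, controlled by $O(\log T)$, and a square-root sum $\sum_n N_{k,T}(t_n-1)^{-1/2}$, which the standard integral comparison bounds by $O(\sqrt{N_k})$. The degenerate first-pull term ($n=1$, where $N_{k,T}(t_1-1)=0$) must be excluded or handled by capping a bin's confidence width on its first sensing; this is the one step that is not mechanical, but its contribution is of lower order and is absorbed elsewhere in the proof of Theorem \ref{thm::regretupper}.

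Finally I would aggregate over bins. Multiplying by $2\Delta_T$ and summing over $k\in[K_T]$, the harmonic part gives $O(K_T\log^2 T)$; since $\Delta_T=K_T^{-1}$ the explicit $\Delta_T^{-1}$ inside $D_{k,T}$ cancels, and with $K_T\le\overline{K}T^{1/3}$ this becomes the first claimed term, of order $\overline{K}\log(T)\log(T+1)T^{1/3}$. For the square-root part the crucial step is aggregating $\sum_{k=1}^{K_T}\sqrt{N_k}$: here I would apply Cauchy--Schwarz, $\sum_k\sqrt{N_k}\le\sqrt{K_T}\,\sqrt{\sum_k N_k}$, together with the pigeonhole bound $\sum_k N_k=\sum_t\#\{k:B_{k,T}\subseteq A_t\}\le TK_T$ (at most all $K_T$ bins are sensed each round), yielding $\sum_k\sqrt{N_k}\le K_T\sqrt{T}$. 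Because $\Delta_T^{1/2}K_T=\sqrt{K_T}$, the $\Delta_T$ powers again collapse and the surviving $\sqrt{K_T}\le\sqrt{\overline{K}}\,T^{1/6}$ turns $\sqrt{T}$ into $T^{2/3}$, producing the second term of order $\sqrt{\overline{K}\lambda_{\max}\log T}\,T^{2/3}$.

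The hard part is not any single calculation but the aggregation: the Cauchy--Schwarz step under the budget constraint $\sum_k N_k\le TK_T$ is exactly what determines the $T^{2/3}$ rate, and it relies on the algebraic cancellation of the $\Delta_T$ factors against $K_T=O(T^{1/3})$. The first-pull singularity and the need to keep the two binning scales ($t$ versus $T$) consistent are the fiddly technical nuisances I would expect to spend the most care on, even though they do not affect the leading order.
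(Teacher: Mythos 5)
Your proposal is correct and takes essentially the same route as the paper's proof: cancel the shared empirical-mean and cost terms, swap the order of summation and reindex each bin by its pull count with $\log t \le \log T$, bound the harmonic part by $O(\log T)$ per bin and the square-root part by integral comparison, and let the cancellation $\Delta_T = K_T^{-1}$ together with $K_T \le \overline{K}T^{1/3}$ deliver the two rates --- noting that your ``crucial'' Cauchy--Schwarz step $\sum_k \sqrt{N_k} \le \sqrt{K_T \sum_k N_k} \le K_T\sqrt{T}$ is mathematically equivalent to the paper's cruder move of extending every per-bin sum to $T$ terms, i.e.\ it buys nothing beyond the trivial bound $N_k \le T$. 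The first-pull singularity you flag is genuine, and the paper's own proof in fact glosses over it (its reindexed sum runs $j$ from $1$ to $N_{k,T}$ although the true denominators are $N_{k,T}(t-1) \in \{0,\dots,N_{k,T}-1\}$, so the $j=0$ term is silently dropped by an off-by-one shift), which means your explicit capping or exclusion of the $N_{k,T}(t-1)=0$ terms is, if anything, the more careful treatment.
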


\begin{lemma}
The deviation probabilities can be bounded \begin{equation*}
    P\bigg(r(A_t) \notin [L_{t,T}(A_t),U_{t,T}(A_t)]\bigg) \leq 2K_Tt^{-2}
\end{equation*}
\end{lemma}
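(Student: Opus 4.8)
The plan is to reduce the statement to a union bound over the individual bins and then apply a Bernstein-type tail bound for Poisson counts. First I would note that, writing $\psi_{k,T}=K_T\int_{B_{k,T}}\lambda(x)\,{\rm d}x$, the true reward decomposes in exactly the same bin-additive form as the confidence bounds, namely $r(A)=\Delta_T\sum_{k:B_{k,T}\subseteq A}\psi_{k,T}-C|A|$. Comparing this with the definitions of $U_{t,T}$ and $L_{t,T}$, the event $r(A_t)\in[L_{t,T}(A_t),U_{t,T}(A_t)]$ is implied by the event that $|\hat{\psi}_{k,T}(t-1)-\psi_{k,T}|\le D_{k,T}(t-1)$ holds simultaneously for every bin $k$ with $B_{k,T}\subseteq A_t$. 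Taking complements and applying a union bound over all $K_T$ bins (which dominates the random collection of bins actually covered by $A_t$) gives
\[
P\big(r(A_t)\notin[L_{t,T}(A_t),U_{t,T}(A_t)]\big)\le\sum_{k=1}^{K_T}P\big(|\hat{\psi}_{k,T}(t-1)-\psi_{k,T}|>D_{k,T}(t-1)\big),
\]
so it suffices to bound each summand by $2t^{-2}$.

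For a fixed bin I would condition on $N_{k,T}(t-1)=n$. When $n=0$ the radius $D_{k,T}(t-1)$ is infinite and the deviation is impossible, so only $n\ge 1$ matters. Conditional on the sensed rounds, $H_{k,T}(t-1)$ is a sum of independent Poisson variables each with mean $\Delta_T\psi_{k,T}$, hence Poisson with mean $\mu=n\Delta_T\psi_{k,T}$; since $\hat{\psi}_{k,T}(t-1)=H_{k,T}(t-1)/(n\Delta_T)$, the deviation event becomes $|H_{k,T}(t-1)-\mu|>a$ with $a=n\Delta_T D_{k,T}(t-1)=2\log t+\sqrt{6\lambda_{\max}(n\Delta_T)\log t}$. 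Using $\psi_{k,T}\le\lambda_{\max}$, so $\mu\le V:=\lambda_{\max}n\Delta_T$, I would invoke the standard Chernoff/Bernstein bounds for Poisson tails, $P(H\ge\mu+a)\le\exp(-a^2/(2(\mu+a/3)))$ and $P(H\le\mu-a)\le\exp(-a^2/(2\mu))$. The remaining work is the algebraic check that this choice of $D_{k,T}$ forces both exponents above $2\log t$: writing $L=\log t$ and expanding $a^2=4L^2+4L\sqrt{6VL}+6VL$ gives $a^2-4L(\mu+a/3)\ge\tfrac{4}{3}L^2+\tfrac{8}{3}L\sqrt{6VL}+2VL\ge 0$ for the upper tail, while $a^2\ge 6VL\ge 4\mu L$ handles the lower tail. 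Hence each one-sided probability is at most $t^{-2}$; summing the two tails and then over the $K_T$ bins yields the claimed $2K_T t^{-2}$.

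The main obstacle is justifying the Poisson concentration under \emph{adaptive} sampling: the counts $N_{k,T}(t-1)$ are data-dependent, because each past TS action $A_j$ is chosen from the observations in rounds before $j$. I would resolve this by exploiting that each sensing decision is predictable with respect to the filtration generated by earlier rounds, whereas the Poisson events in round $j$ are independent of that filtration; conditioning on $\{N_{k,T}(t-1)=n\}$ together with the identity of the sensed rounds therefore leaves the conditional law of $H_{k,T}(t-1)$ a genuine $\mathrm{Poisson}(n\Delta_T\psi_{k,T})$ (an optional-stopping/martingale argument underlies the \citet{Grant2018} confidence bounds being invoked here). Once this conditioning is in place, the bin-additive decomposition and the constant-chasing in the exponent are routine.
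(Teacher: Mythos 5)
Your reduction to per-bin deviations and the Bernstein algebra follow the same route as the paper, but the step you yourself flag as ``the main obstacle'' is resolved incorrectly, and this is a genuine gap. It is not true that conditioning on $\{N_{k,T}(t-1)=n\}$ together with the identity of the sensed rounds leaves $H_{k,T}(t-1)$ distributed as $\mathrm{Poisson}(n\Delta_T\psi_{k,T})$. Predictability of the sensing decisions makes the round-$j$ counts independent of the \emph{past}, but the event $\{N_{k,T}(t-1)=n\}$ depends on the \emph{future} of those counts: whether a bin is re-sensed at round $j'>j$ depends on what was observed in it at round $j$, so conditioning on the sensed set biases the earlier counts. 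Concretely, if a policy senses bin $k$ at round $1$ and re-senses it at round $2$ only when the round-$1$ count is zero, then conditional on $N_k(2)=2$ the total count is $\mathrm{Poisson}(\mu)$, not $\mathrm{Poisson}(2\mu)$. This is exactly the classical adaptive-sampling pitfall, and the optional-stopping remark does not repair it. The paper avoids it by a union bound over all possible values $s\in\{1,\dots,t-1\}$ of $N_{k,T}(t-1)$ (equivalently, over a virtual i.i.d.\ sequence of per-sensing counts $Y_{k,1},Y_{k,2},\dots$, the deviation event at the random count being contained in the union of the fixed-count deviation events), and only then applies the Poisson Bernstein bound of Lemma 1 of \citet{Grant2018} to each fixed-count event; the conditional probabilities displayed in the paper's proof are to be read in this fixed-count sense.

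This necessary union bound also breaks your constant-accounting as written: with $t-1$ extra terms per bin, each fixed-count tail must be $O(t^{-3})$, not $O(t^{-2})$, to recover the claimed $2K_Tt^{-2}$; your algebra only certifies exponents of $2\log t$, which after the union bound would give only $2K_Tt^{-1}$. Fortunately the same radius $D_{k,T}$ does deliver $3\log t$: with $L=\log t$, $V=\lambda_{\max}n\Delta_T\ge\mu$ and $a=2L+\sqrt{6VL}$, one checks $a^2-6L\mu-2La\ge 2L\sqrt{6VL}\ge 0$ for the upper (Bernstein) tail and $a^2\ge 6VL\ge 6\mu L$ for the lower tail, so each one-sided fixed-count probability is at most $t^{-3}$; summing over the two tails, the $t-1$ counts and the $K_T$ bins then gives $2K_Tt^{-2}$. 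So the repair is routine, but as proposed the argument both rests on a false conditional-distribution claim and, once that claim is replaced by the required union bound, falls short of the stated rate.
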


Combining these results we have: \begin{align*}
    BReg&(T) \leq CU\underline{K}^{-1}T^{2/3} + 4\overline{K}\log(T)\log(T+1)T^{1/3} \\
            & 
            + \sqrt{24\overline{K}\lambda_{\max}\log(T)}T^{2/3} 
           + 2K_T \lambda_{\max}\sum_{t=1}^T 2t^{-2} 
\end{align*}
which gives the required result as $\sum_{t=1}^\infty t^{-2} \leq \frac{\pi^2}{6}$.
\end{proof}

\section{Simulations}
\label{sec.sims}
%
In this section, we provide simulation examples on the performance of the Thompson sampling approach presented in Section~\ref{sec.thompson}. We first examine the effect of the rebinning rate on the regret and then investigate the performance of the Thompson sampling approach in relation to other algorithms. 

\begin{figure}[htbp]
\centering
\includegraphics[width=0.75\columnwidth]{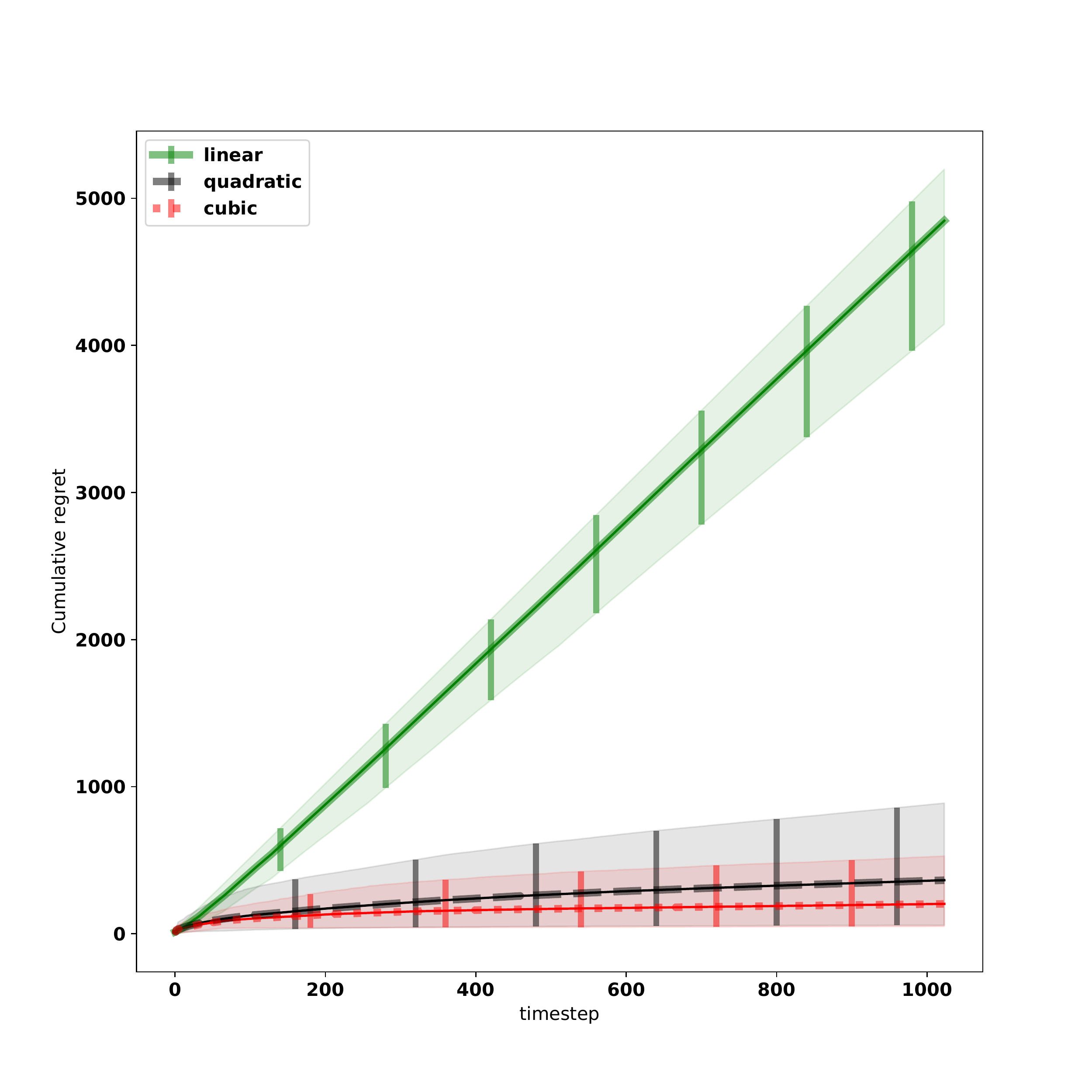}
 \caption{Cumulative regret comparing different rebinning rates.} \label{fig.syn.rebinrates_regret}
\end{figure}

\subsection{Effect of rebinning rate}\label{sec::rebinning}
Firstly we examine the effect of different rebinning rates in a simple unimodal setting with  $\lambda(x)=\frac{1000}{21}(x-x^2)$, $C=10$, and $U=1$ sensor. This setting is chosen such that the optimal action can be calculated as $A^* = [0.3, 0.7]$. Here, and throughout our experiments, we set the prior parameters for Thompson sampling to be $\alpha=0.5$ and $\beta=0.5/C$, where scaling by cost $C$ makes the prior relevant to the expected scale of costs in the problem. We also set the truncation $\lambda_{\max}$ to be ten times the true maximal value of $\lambda$; $\lambda_{\max}$ is an inconvenient parameter that is only needed for the theory, so we set it to a conservative large value that should have no influence on the real behaviour of the algorithm. The experiment is run 10 times for $T=1024$ timesteps starting with $K_0=4$ bins.

We compare linear, square root and cube root rebinning rates: the number of bins $K_t$ is doubled in rounds where $t$ (in the linear case), $t^{1/2}$ (square root case) or $t^{1/3}$ (cube root case) is twice its value at the last rebinning time. Actions are selected using the TS method of Algorithm \ref{alg::TS} and Fig.~\ref{fig.syn.rebinrates_regret} shows that the cumulative regret is consistently lower under the cube root rate. While under the linear rebinning rate, actions with reward close to that of $A^*$ become available more quickly, reducing the discretisation regret, the issue is that the majority of bins contain very little data and the posterior inference is heavily dependent on the prior. Under the cube root (and indeed square root) rebinning rate the action set grows more slowly but the unavoidable discretisation regret is balanced by better action selection. The square root case is surprisingly similar to the cube root case despite a weaker theoretical rate in this case. We demonstrate the shrinking of the discretisation regret in the supplementary material.

We also show, in Fig.~\ref{fig.syn.rebinrates.posterior}, the posterior inference under the linear and cube root settings at the last time step of one run of the experiment. 
The posterior under the linear rebinning is highly unconcentrated with simply insufficient numbers of observations in almost all bins.  
The cube root rate on the other hand results in a posterior which is much more concentrated about the truth in the region where it matters.

\begin{figure}[htbp]
\centering
\subfigure[Linear]{\includegraphics[width=0.7\columnwidth]{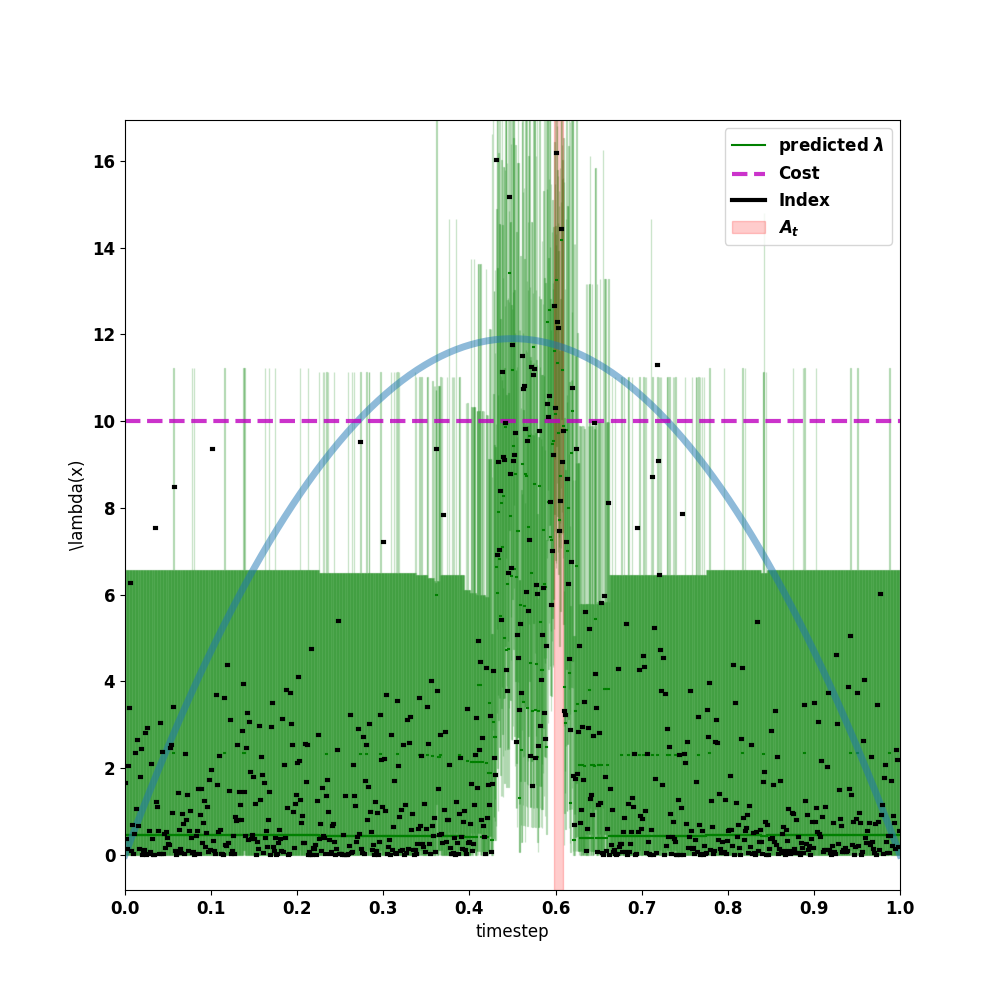}}

\subfigure[Cube Root] {\includegraphics[width=0.7\columnwidth]{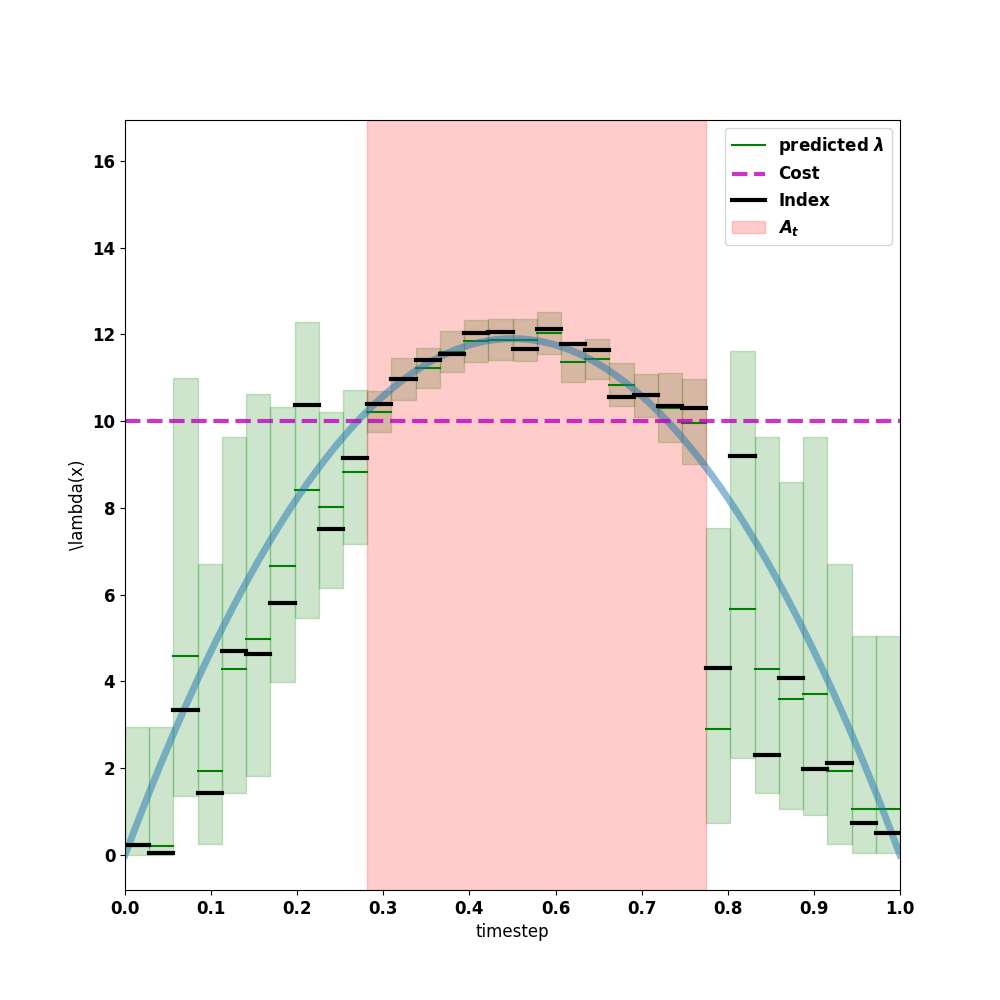}}
  \caption{Posterior under the linear and cube root rebinning rates at round $T=1024$. We show the true rate function (blue) and cost (pink), the posterior credible interval (light green) and mean (dark green) per bin. Thompson samples are shown in black, and the selected interval, $A_T$, is the (red) vertical bar. The initial number of bins is 4 in both cases and the final number of bins, $K_T$, is 2048 for the linear rebinning schedule and 32 bins for the cube root schedule.}\label{fig.syn.rebinrates.posterior}
\end{figure}

\subsection{Comparison to Baselines}
We now compare different baseline policies solely using the cube root rebinning schedule. Experiments with the unimodal rate of Section \ref{sec::rebinning} were not informative since the problem is an easy one. We instead use a bimodal rate $\lambda(x) =\max\big(0.001,\frac{15 \sin(10x)}{
                                                       \sqrt{(10x + 1)} + x}\big)$ with $C=2$ and $U=2$ sensors.
Each experiment was run 10 times for $T=1000$ time steps, starting with $K_0=16$ bins and terminating with $K_T=128$ bins.
In addition to the {Thompson sampling} approach described in Section~\ref{sec.thompson}, we consider three other algorithms, which are summarised here and described precisely in the supplementary material.
(i) An upper confidence bound (UCB) approach, in which the decision-maker chooses what would be an optimal action if the true rates were $U_{t,t}$ (as defined in the proof of Theorem 1); this is essentially the FP-CUCB algorithm of \citet{Grant2018}, albeit with a changing mesh, and requires the specification of an upper bound $\lambda_{\max}$ on the rate in order to define the action selection. In our experiments we fix this $\lambda_{\max}$ to the correct value; in practise a conservative estimate is usually available, but for this algorithm the choice of $\lambda_{\max}$ strongly affects the actions selected, in contrast with the TS algorithm, and we choose the most favourable $\lambda_{\max}$ for this algorithm.
(ii) A modified-UCB approach ({mUCB}) where the empirical mean for each histogram bin $\hat{\psi}_{k}$ is used in place of the overall maximum rate $\lambda_{\max}$. Note this modification invalidates the concentration results used in \citet{Grant2018}, but appears to improve performance in practice.
(iii) An {$\epsilon$-Greedy} approach where the intervals are selected according to the empirical mean for each bin $\hat{\psi}_{k}$ but occasionally an explorative randomisation step occurs in which the algorithm samples, for each bin, a draw from the prior. The randomisation step is taken with probability $\epsilon=0.01$.

\begin{figure} [htbp]
\centering
\includegraphics[width=0.75\columnwidth]{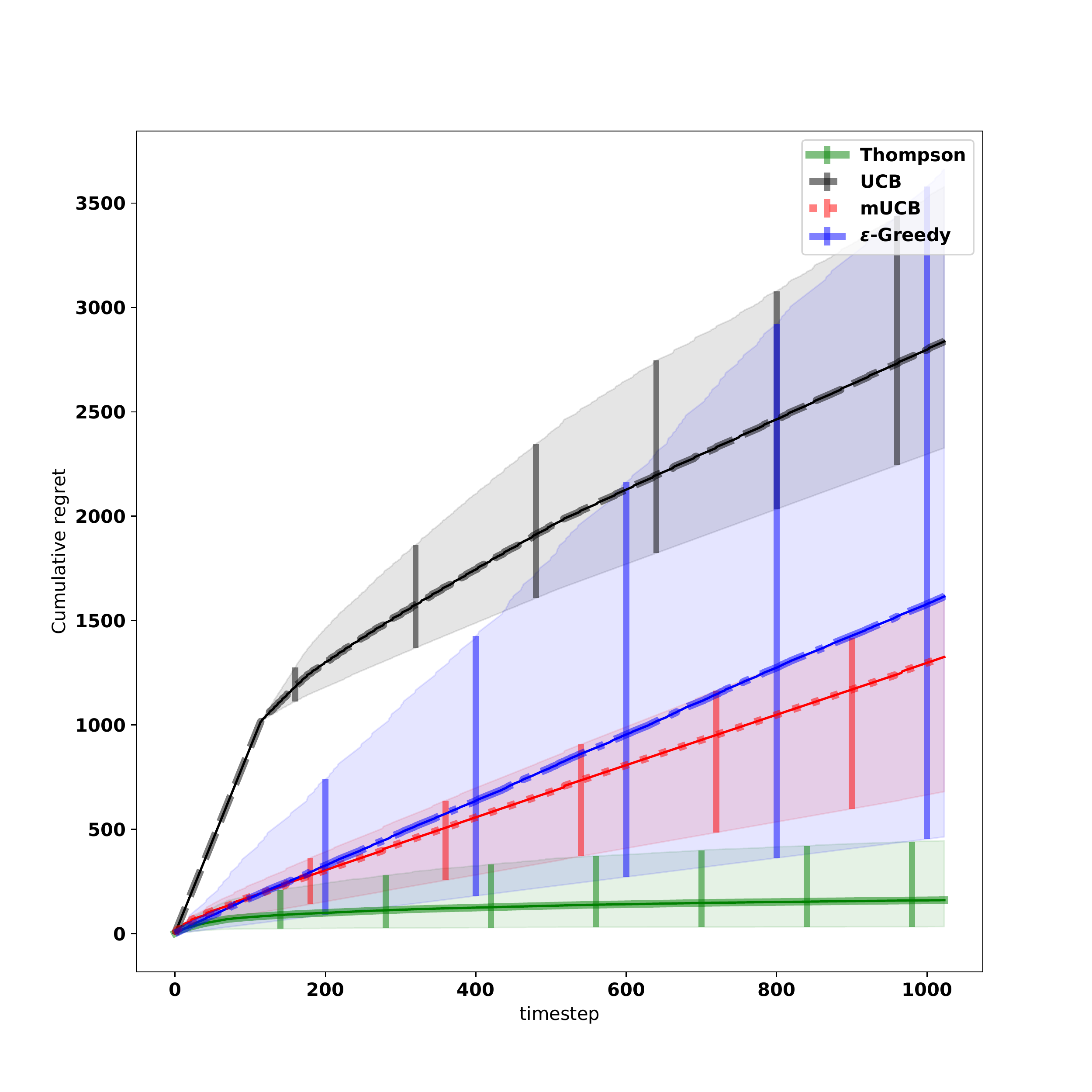}
\caption{Cumulative regret plot for the bimodal rate functions. The experiments are repeated 10 times and the mean and 95\% empirical confidence interval is shown for each policy.}\label{fig.syn.regret_plots}
\end{figure}

The cumulative regret for each policy is shown in Figure~\ref{fig.syn.regret_plots}. The worst performing policy is the UCB approach, despite its theoretical properties.
The poor performance of the UCB policy is due to the overestimation of the true rate as can be seen in the illustrative example shown in Figure~\ref{fig.syn.multimodal}(d). Even after 900 iterations, the UCB values (in black) are close to the cost threshold even in the regions where the true rate is low and there is little uncertainty. In contrast the modified-UCB values, that do not depend on $\lambda_{\max}$, are less inflated where the uncertainty is low (Figure~\ref{fig.syn.multimodal}(c)) resulting in more often choosing a better action.
In Fig.\ \ref{fig.syn.regret_plots} the $\epsilon$-Greedy achieves similar mean regret to modified-UCB but with a higher variance.
The $\epsilon$-Greedy approach has the highest variance due to the greediness of the algorithm. A higher value of $\epsilon$ would reduce variance but would increase the exploration cost. The TS approach consistently outperforms all other policies.


Further intuition can also be gained from the posterior examples shown in Figure~\ref{fig.syn.multimodal}. 
These were selected at time step $T=900$ from one of the experimental runs. 
The TS approach has selected an action close to optimal. Further, the posterior variance outside the optimal interval is significantly higher that in the selected regions as only a small number of observations were taken in those regions demonstrating the high efficiency of the method. In contrast both UCB approaches have uniformly low posterior variance in the entirety of the domain reflecting the large number of observations taken incurring a high exploration cost.
In contrast, the $\epsilon$-Greedy approach selects smaller than optimal intervals with high posterior variance outside these regions. This reflects an under-exploration of the greedy approach which is only able to escape bad local minima when the randomisation step is used.

\begin{figure}[htb]
\centering
\mbox {
\subfigure[TS, $A_t=$  \newline {$[0.007, 0.281],
 [0.687, 0.882]$}]{\includegraphics[width=0.5\columnwidth]{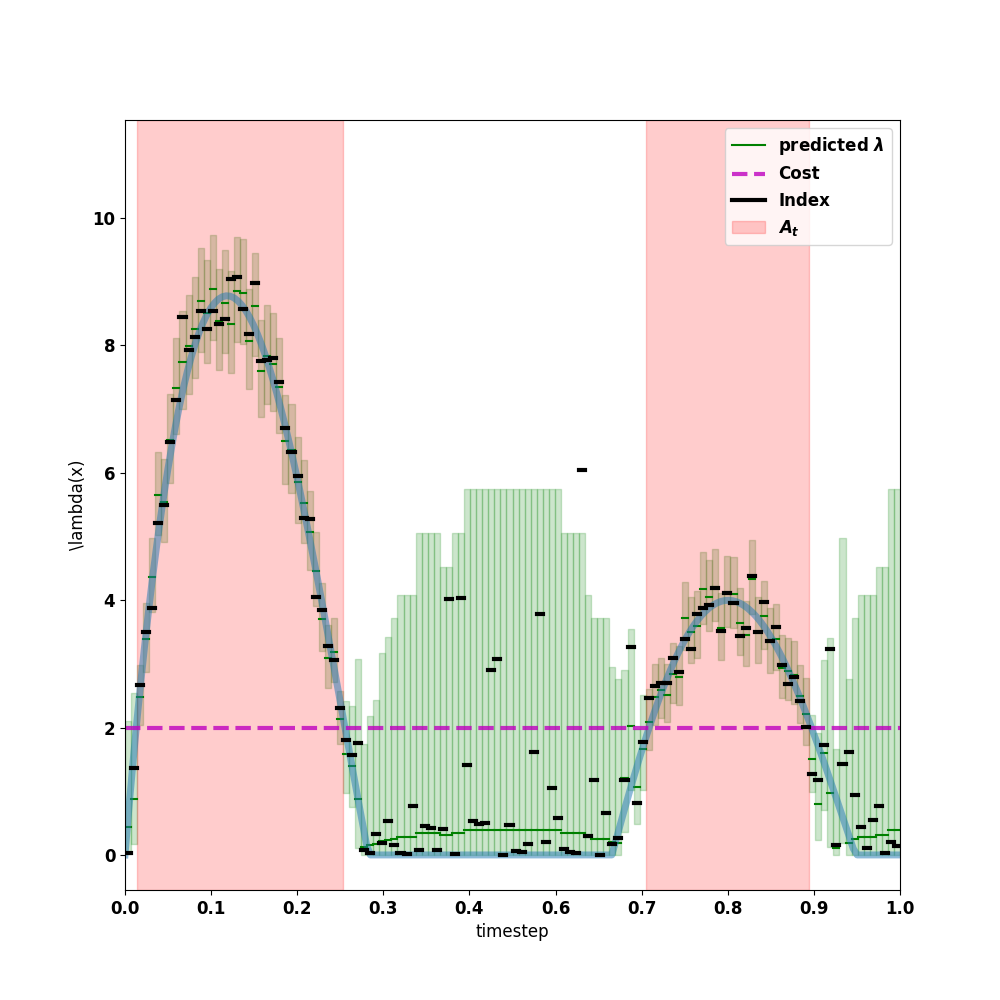}}
\subfigure[$\epsilon$-greedy, $A_t=$   \newline  {$[0.062,  0.312 ], 
 [0.718, 0.812 ]$}]{\includegraphics[width=0.5\columnwidth]{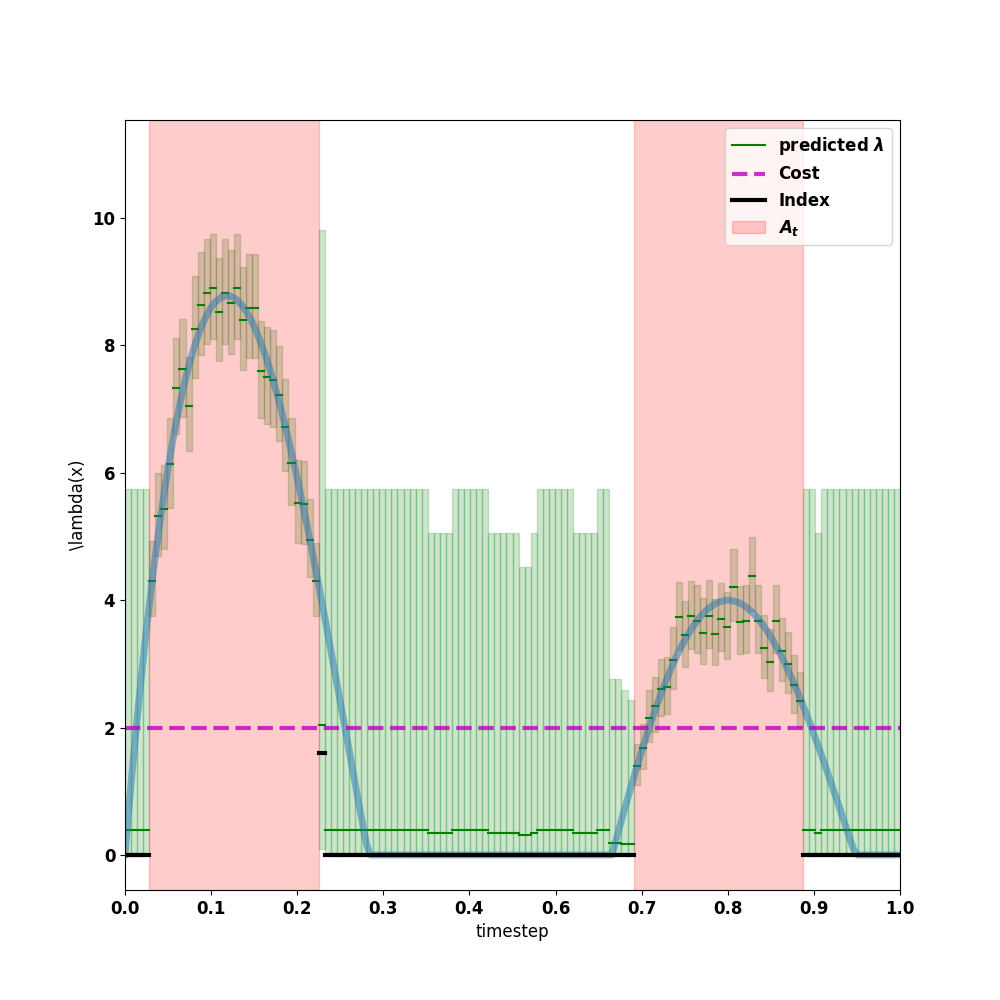}}
} \mbox {
\subfigure[mUCB, $A_t=$ \newline   {$[0., 0.320], 
 [0.640, 0.937]$}]{\includegraphics[width=0.5\columnwidth]{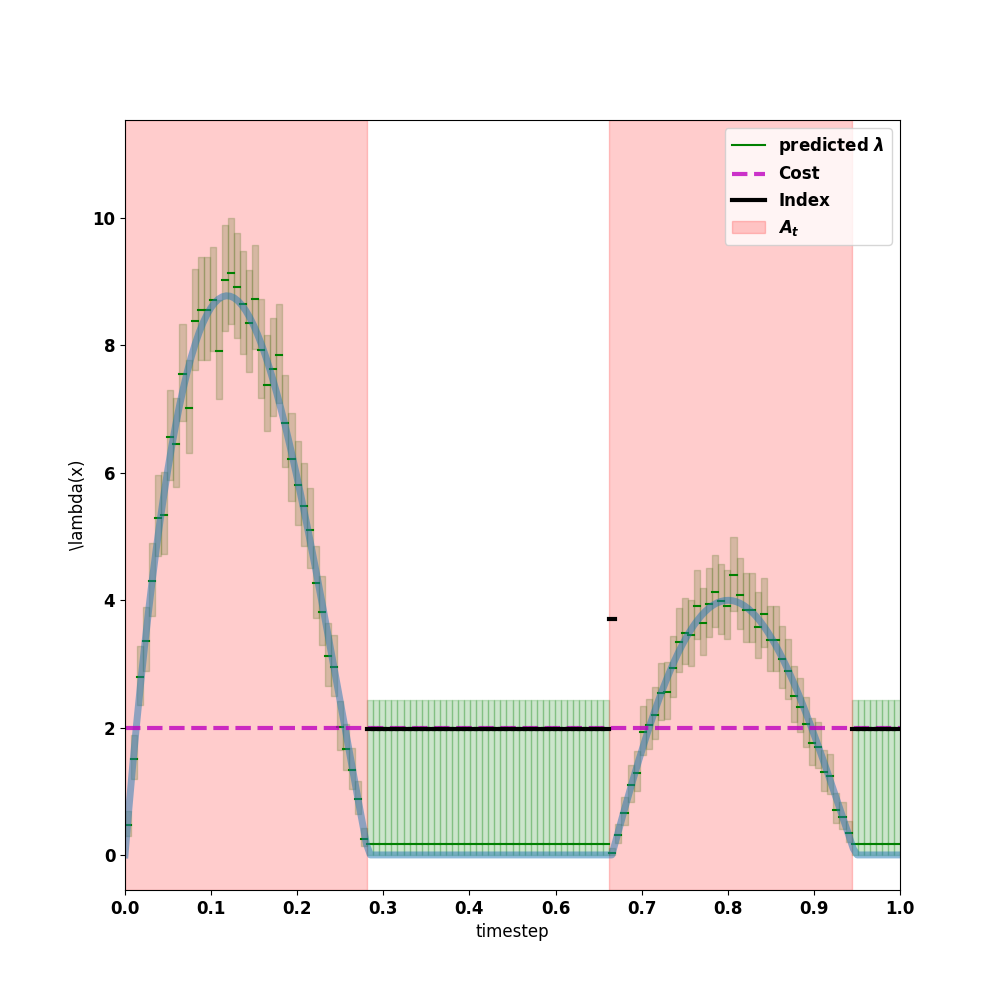}}
\subfigure[UCB, $A_t=$  \newline {$[0., 0.320],
 [0.640, 0.937]$} ] {\includegraphics[width=0.5\columnwidth]{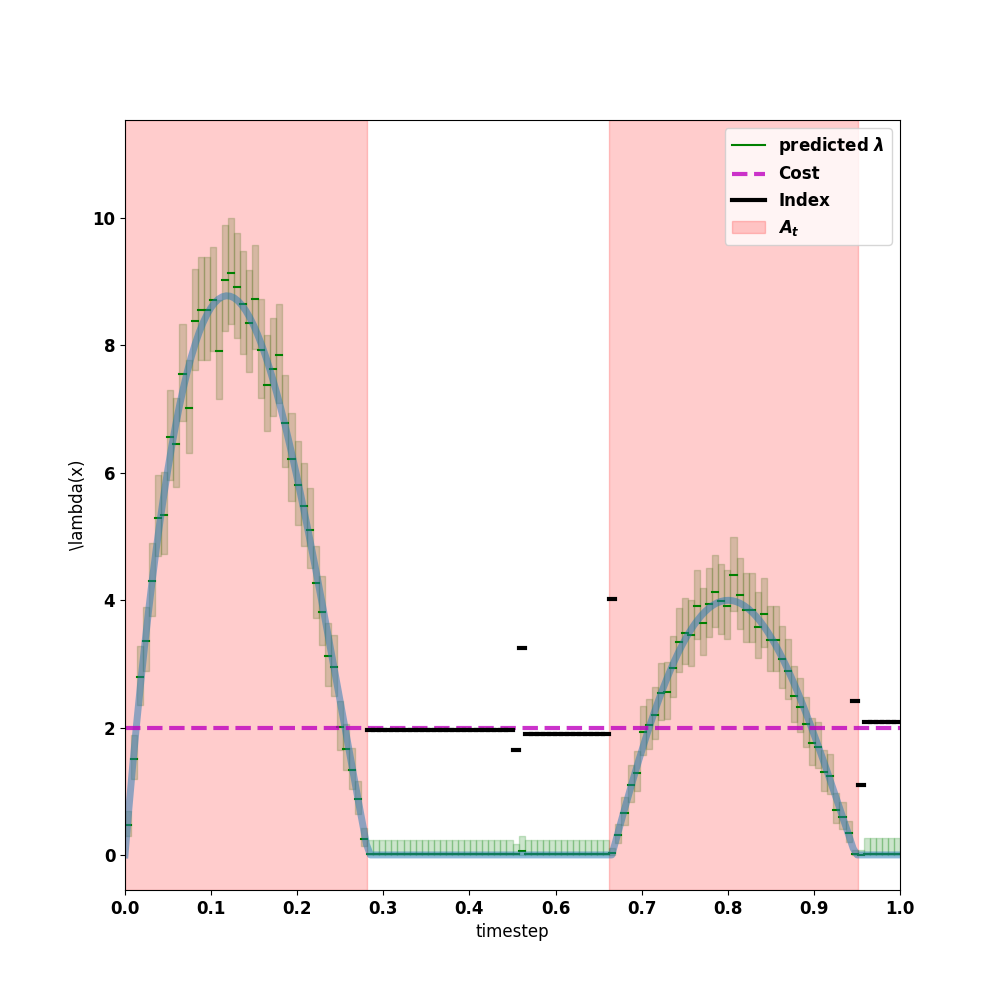}}
}
  \caption{Posterior under different action selection strategies for the bimodal test function. The true rate function (orange), posterior mean (blue) and 95\% confidence interval (green infill) is shown. Rate samples for each method are shown in black for each bin and the cost threshold is the (magenta) horizontal dashed line. The optimal action is to select two intervals $A^* = [0.013, 0.280], [0.675, 0.882]$. }\label{fig.syn.multimodal}
\end{figure}

In summary, the TS approach outperforms all the other approaches we have considered and is able to efficiently trade-off exploration penalty and exploitation reward.



\section{Conclusion}
We have presented a continuum-armed bandit model of sequential sensor placement. This model introduces the complexities of point process data and heavy-tailed reward distributions to continuum-armed bandits for the first time through its Poisson process observations. We proposed a Thompson sampling approach to make decisions based on fast non-parametric Bayesian inference and an increasingly granular action set, and derived an upper bound on the Bayesian regret of the policy which is independent of the choice of prior distribution. 

In our simulation study we have studied two aspects of our approach. Firstly we examined the effect of the rebinning rate on posterior inference and regret. The theoretically-optimal cube root rate resulted in more accurate posterior inference than a linear or square root rebinning rate. This effect was also evident in a lower regret for the cube root rate. 

Our empirical study also contrasted our Thompson sampling approach to alternative approaches like UCB or $\epsilon$-greedy policies. In both the cases we examined, we found the other methods either over-explored (e.g.\ UCB) or over-exploited (e.g.\ $\epsilon$-greedy). The TS approach achieved the best trade-off between the two and consistently achieved the lowest regret.

The observation model and rebinning strategies we have presented here are straightforward; it would be interesting to extend the algorithm and analysis to account for imperfect observations and to allow for heterogeneous bin widths, letting us capture more detail of the rate function in areas where we have made many observations and adopt a smoother estimate in others. 

An alternative to the discretisation approach we have followed is to employ a continuous model such as a Cox process for which efficient approximate inference methods exist~\cite{John2018}. Action selection under the additive cost model would still be possible via a continuous action space extension of the AS-IM routine. The regret analysis in this setting would be more involved although recent concentration results \citep[e.g.][]{Kirichenko2015} suggest possible approaches.

\clearpage

\bibliography{SensorPlacementRefs}

\begin{thebibliography}{31}
\providecommand{\natexlab}[1]{#1}
\providecommand{\url}[1]{\texttt{#1}}
\expandafter\ifx\csname urlstyle\endcsname\relax
  \providecommand{\doi}[1]{doi: #1}\else
  \providecommand{\doi}{doi: \begingroup \urlstyle{rm}\Url}\fi

\bibitem[Agrawal(1995)]{Agrawal1995}
Agrawal, R.
\newblock The continuum-armed bandit problem.
\newblock \emph{SIAM Journal on Control and Optimization}, 33:\penalty0
  1926--1951, 1995.

\bibitem[Agrawal \& Goyal(2012)Agrawal and Goyal]{Agrawal2012}
Agrawal, S. and Goyal, N.
\newblock Analysis of {Thompson} {Sampling} for the multi-armed bandit problem.
\newblock In \emph{COLT}, 2012.

\bibitem[Auer et~al.(2007)Auer, Ortner, and Szepesv\'{a}ri]{Auer2007}
Auer, P., Ortner, R., and Szepesv\'{a}ri, C.
\newblock Improved rates for the stochastic continuum-armed bandit problem.
\newblock In \emph{COLT}, pp.\  454--468, 2007.

\bibitem[Basu \& Ghosh(2017)Basu and Ghosh]{Basu2018}
Basu, K. and Ghosh, S.
\newblock Analysis of {Thompson} {sampling} for {Gaussian} process optimization
  in the bandit setting, 2017.
\newblock arXiv:1705.06808.

\bibitem[Bubeck \& Liu(2013)Bubeck and Liu]{BubeckLiu2013}
Bubeck, S. and Liu, C.
\newblock Prior-free and prior-dependent regret bounds for thompson sampling.
\newblock In \emph{NeurIPS}, pp.\  638--646, 2013.

\bibitem[Bubeck et~al.(2011)Bubeck, Munos, Stoltz, and
  Szepesv\'{a}ri]{Bubeck2011}
Bubeck, S., Munos, R., Stoltz, G., and Szepesv\'{a}ri, C.
\newblock {${\mathcal X}$}-armed bandits.
\newblock \emph{J.\ Mach.\ Learn.\ Res.}, 12:\penalty0 1655--1695, 2011.

\bibitem[Bull(2015)]{Bull2015}
Bull, A.
\newblock Adaptive-treed bandits.
\newblock \emph{Bernoulli}, 21:\penalty0 2289--2307, 2015.

\bibitem[Carlsson et~al.(2016)Carlsson, Carlsson, and
  Devulapalli]{Carlsson2016}
Carlsson, J., Carlsson, E., and Devulapalli, R.
\newblock Shadow prices in territory division.
\newblock \emph{Netw.\ Spat.\ Econ.}, 16:\penalty0 893--931, 2016.

\bibitem[Cesa-Bianchi \& Lugosi(2012)Cesa-Bianchi and Lugosi]{CesaBianchi2012}
Cesa-Bianchi, N. and Lugosi, G.
\newblock Combinatorial bandits.
\newblock \emph{J.\ Comput.\ Syst.\ Sci.}, 78:\penalty0 1404--1422, 2012.

\bibitem[Chen et~al.(2016)Chen, Wang, Yuan, and Wang]{Chen2016}
Chen, W., Wang, Y., Yuan, Y., and Wang, Q.
\newblock Combinatorial multi-armed bandit and its extension to
  probabilistically triggered arms.
\newblock \emph{J.\ Mach.\ Learn.\ Res.}, 17:\penalty0 1746--1778, 2016.

\bibitem[Chowdhury \& Gopalan(2017)Chowdhury and Gopalan]{Chowdhury2017}
Chowdhury, S. and Gopalan, A.
\newblock On kernelized multi-armed bandits, 2017.
\newblock arXiv:1704.00445.

\bibitem[Grant et~al.(2018)Grant, Leslie, Glazebrook, Szechtman, and
  Letchford]{Grant2018}
Grant, J., Leslie, D., Glazebrook, K., Szechtman, R., and Letchford, A.
\newblock Adaptive policies for perimeter surveillance problems, 2018.
\newblock arXiv:1810.02176.

\bibitem[Gregory \& Loredo(1992)Gregory and Loredo]{astronomy}
Gregory, P. and Loredo, T.
\newblock A new method for the detection of a periodic signal of unknown shape
  and period.
\newblock \emph{Astrophys.\ J.}, 398:\penalty0 146--168, 1992.

\bibitem[Grill et~al.(2015)Grill, Valko, and Munos]{Grill2015}
Grill, J., Valko, M., and Munos, R.
\newblock Black-box optimization of noisy functions with unknown smoothness.
\newblock In \emph{NeurIPS}, pp.\  667--675, 2015.

\bibitem[Gugushvili et~al.(2018)Gugushvili, van~der Meulen, Schauer, and
  Spreij]{Gugushvili2018}
Gugushvili, S., van~der Meulen, F., Schauer, M., and Spreij, P.
\newblock Fast and scalable non-parametric bayesian inference for poisson point
  processes, 2018.
\newblock arxiv:1804.03616.

\bibitem[Heikkinen \& Arjas(1999)Heikkinen and Arjas]{forestry}
Heikkinen, J. and Arjas, E.
\newblock Modeling a {Poisson} forest in variable elevations: a nonparametric
  bayesian approach.
\newblock \emph{Biometrics}, 55:\penalty0 738--745, 1999.

\bibitem[Huyuk \& Tekin(2018)Huyuk and Tekin]{Huyuk2018}
Huyuk, A. and Tekin, C.
\newblock Thompson sampling for combinatorial multi-armed bandit with
  probabilistically triggered arms, 2018.
\newblock arXiv:1809.02707.

\bibitem[John \& Hensman(2018)John and Hensman]{John2018}
John, S.~T. and Hensman, J.
\newblock Large-scale {Cox} process inference using variational {Fourier}
  features, 2018.
\newblock arXiv:1804.01016.

\bibitem[Kaufmann et~al.(2012)Kaufmann, Korda, and Munos]{Kaufmann2012}
Kaufmann, E., Korda, N., and Munos, R.
\newblock Thompson sampling: An asymptotically optimal finite-time analysis.
\newblock In \emph{ALT}, pp.\  199--213, 2012.

\bibitem[Kirichenko \& {Van Zanten}(2015)Kirichenko and {Van
  Zanten}]{Kirichenko2015}
Kirichenko, A. and {Van Zanten}, J.
\newblock Optimality of poisson process intensity learning with gaussian
  processes.
\newblock \emph{J.\ Mach.\ Learn.\ Res.}, 16:\penalty0 2909--2919, 2015.

\bibitem[Kleinberg(2005)]{Kleinberg2005}
Kleinberg, R.
\newblock Nearly tight bounds for the continuum-armed bandit problem.
\newblock In \emph{NeurIPS}, pp.\  697--704, 2005.

\bibitem[Kleinberg et~al.(2008)Kleinberg, Slivkins, and Upfal]{Kleinberg2008}
Kleinberg, R., Slivkins, A., and Upfal, E.
\newblock Multi-armed bandits in metric spaces.
\newblock In \emph{Proc.\ 40th Annu.\ ACM Symp.\ on Theory of Computing}, pp.\
  681--690, 2008.

\bibitem[Komiyama et~al.(2015)Komiyama, Honda, and Nakagawa]{Komiyama2015}
Komiyama, J., Honda, J., and Nakagawa, H.
\newblock Optimal regret analysis of {T}hompson sampling in stochastic
  multi-armed bandit problem with multiple plays, 2015.
\newblock arXiv:1506.00779.

\bibitem[Korda et~al.(2013)Korda, Kaufmann, and Munos]{Korda2013}
Korda, N., Kaufmann, E., and Munos, R.
\newblock Thompson sampling for 1-dimensional exponential family bandits.
\newblock In \emph{NeurIPS}, pp.\  1448--1456, 2013.

\bibitem[Luedtke et~al.(2016)Luedtke, Kaufmann, and Chambaz]{Luedtke2016}
Luedtke, A., Kaufmann, E., and Chambaz, A.
\newblock Asymptotically optimal algorithms for multiple play bandits with
  partial feedback, 2016.
\newblock arXiv:1606.09388v1.

\bibitem[May et~al.(2012)May, Korda, Lee, and Leslie]{May2012}
May, B., Korda, N., Lee, A., and Leslie, D.
\newblock Optimistic {B}ayesian sampling in contextual-bandit problems.
\newblock \emph{J.\ Mach.\ Learn.\ Res.}, 13:\penalty0 2069--2106, 2012.

\bibitem[Russo \& {Van Roy}(2014)Russo and {Van Roy}]{RussoVanRoy2014}
Russo, D. and {Van Roy}, B.
\newblock Learning to optimize via posterior sampling.
\newblock \emph{Math.\ Oper.\ Res.}, 39:\penalty0 1221--1243, 2014.

\bibitem[Russo et~al.(2018)Russo, {Van Roy}, Kazerouni, Osband, and
  Wen]{Russo2018}
Russo, D.~J., {Van Roy}, B., Kazerouni, A., Osband, I., and Wen, Z.
\newblock A tutorial on {T}hompson sampling.
\newblock \emph{Found.\ Trends Mach.\ Learn.}, 11:\penalty0 1--96, 2018.

\bibitem[Srinivas et~al.(2009)Srinivas, Krause, Kakade, and
  Seeger]{Srinivas2009}
Srinivas, N., Krause, A., Kakade, S.~M., and Seeger, M.
\newblock Gaussian process optimization in the bandit setting: No regret and
  experimental design, 2009.
\newblock arXiv:0912.3995.

\bibitem[Thompson(1933)]{Thompson1933}
Thompson, W.
\newblock On the likelihood that one unknown probability exceeds another in
  view of the evidence of two samples.
\newblock \emph{Biometrika}, 25:\penalty0 285--294, 1933.

\bibitem[Wang \& Chen(2018)Wang and Chen]{Wang2018}
Wang, S. and Chen, W.
\newblock Thompson sampling for combinatorial semi-bandits, 2018.
\newblock arXiv:1803.04623.

\end{thebibliography}
\bibliographystyle{icml2019}

\appendix
\onecolumn

\section{Regret bound proofs}
\subsubsection*{Proof of Lemma 1}
Define $A_{\min,t} =
\bigcap_{A\in{\mathcal A}_t \,:\, A^*\subseteq A} A$ as the smallest interval (or union of intervals) in $\mathcal{A}_t$ containing the optimal interval (or union of intervals).
It will be easier to bound the regret of $A_{\min,t}$ than $A^*_t$ wrt $A^*$.
We have, for $t \in {\mathbb N}$, 
\begin{align*}
     \delta(A^*_t) &=  r(A^*)- r(A^*_t) \\
                               &\leq  r(A^*) - r(A_{\min,t}) \\
                               &=  \int_{A^*}(\lambda(x)-C)\,{\rm d}x - \int_{A_{\min,t}}(\lambda(x) - C)\,{\rm d}x \\
                               &=  C|A_{\min,t}\setminus A^*|-\int_{A_{min,t}\setminus A^*} \lambda(x)dx   \\
                               &\leq 2CU\Delta_t.
                               \end{align*}
Here, the final inequality holds since $2\Delta_t$ bounds the difference between the lengths of subintervals of $A_{min,t}$ and $A^*_t$, and there are $U$ such subintervals. Since $\Delta_t=K_t^{-1}\leq\underline{K}^{-1}T^{-1/3}$ the result follows immediately.

\subsubsection*{Proof of Lemma 2}

Consider the term inside the expectation \begin{align*}
   \sum_{t=1}^T U_{t,T}(A_t) - L_{t,T}(A_t)    &= 2\Delta_T \sum_{t=1}^T \sum_{k: B_{k,T} \subseteq A_t} D_{k,T}(t-1) \\
    &= 2\Delta_T \sum_{t=1}^T \sum_{k: B_{k,T} \subseteq A_t} \frac{2\log(t)}{\Delta_TN_{k,T}(t-1)} + \sqrt{\frac{6\lambda_{\max}\log(t)}{\Delta_T N_{k,T}(t-1)}} \\
    &= 2\Delta_T \sum_{t=1}^T \sum_{k=1}^{K_T} \mathbb{I}\{B_{k,T} \subseteq A_t\}\bigg(\frac{2\log(t)}{\Delta_T\sum_{s=1}^{t-1}\mathbb{I}\{B_{k,T} \subseteq A_s\}} + \sqrt{\frac{6\lambda_{\max}\log(t)}{\Delta_T{\color{black}\sum_{s=1}^{t-1}}\mathbb{I}\{B_{k,T} \subseteq A_s\}}}\bigg)  \\
    &\leq 2\Delta_T \sum_{k=1}^{K_T} \sum_{j=1}^{N_{k,T}} \frac{2\log(
    {\color{black} T})}{j\Delta_T} + \sqrt{\frac{6\lambda_{\max}\log(
    {\color{black} T})}{j\Delta_T}} \\
    &\leq 2\Delta_T K_T\bigg(\sum_{j=1}^T \frac{
    {\color{black} 2}\log(
    {\color{black} T})}{j\Delta_T} + \sum_{j=1}^T \sqrt{\frac{6\lambda_{\max}\log(
    {\color{black} T})}{j\Delta_T }} \bigg) \\
    &= 4K_T\log(T)\log(T+1) + \sqrt{24\lambda_{\max}K_T\log(T)}T^{1/2} \\
    &{\color{black}\leq 4 \overline{K} \log(T)\log(T+1)T^{1/3} + \sqrt{24\overline{K}\lambda_{\max}\log(T)}T^{2/3}}
\end{align*}
{\color{black} where the penultimate line is due to $\Delta_T=K_T^{-1}$, and the final inequality is because $K_T\leq \overline{K} T^{1/3}$.}

\subsubsection*{Proof of Lemma 3}
We have the following, which holds for any round $t$ \begin{align*}
    &\enspace P\bigg(r(A_t) \notin [L_{t,T}(A_t),U_{t,T}(A_t)] \bigg) \\
    &\leq P\bigg(r(A_t) \leq L_{t,T}(A_t)\bigg) + P\bigg(r(A_t) \geq U_{t,T}(A_t)\bigg) \\
    &= P\bigg(\sum_{k:B_{k,T} \subseteq A_t} \psi_{k,T} \leq \sum_{k:B_{k,T} \subseteq A_t} \left[\hat{\psi}_{k,T}(t-1) - D_{k,T}(t-1)\right] \bigg) \\
    &\quad + P\bigg(\sum_{k:B_{k,T} \subseteq A_t} \psi_{k,T} \geq \sum_{k:B_{k,T} \subseteq A_t} \left[\hat{\psi}_{k,T}(t-1) + D_{k,T}(t-1)\right]\bigg) \\
    &\leq \sum_{k:B_{k,T}\subseteq A_t} \Bigg[P\bigg(\psi_{k,T} -\hat{\psi}_{k,T}(t-1) \leq -D_{k,T}(t-1)\bigg) + P\bigg(\psi_{k,T} - \hat{\psi}_{k,T}(t-1) \geq D_{k,T}(t-1)\bigg)\Bigg] \\
    &\leq \sum_{k=1}^{K_T}P\bigg(|\psi_{k,T}-\hat{\psi}_{k,T}(t-1)|\geq \frac{2\log(t)}{\Delta_TN_{k,T}(t-1)}+\sqrt{\frac{6\lambda_{\max}\log(t)}{\Delta_TN_{k,T}(t-1)}} \bigg) \\
    &\leq \sum_{k=1}^{K_T}\sum_{s=1}^{t-1} P\bigg(|\psi_{k,T}-\hat{\psi}_{k,T}(t-1)|\geq \frac{2\log(t)}{\Delta_TN_{k,T}(t-1)}+\sqrt{\frac{6\lambda_{\max}\log(t)}{\Delta_TN_{k,T}(t-1)}} \enspace \bigg| \enspace N_{k,T}(t-1)=s\bigg)\leq 2K_T t^{-2}.
\end{align*}
The final inequality is a direct application of Lemma 1 of \cite{Grant2018} which in turn exploits Bernstein's Inequality for independent Poisson random variables.

\section{Proof of optimality and efficiency of AS-IM}
\subsubsection*{Proof of Theorem 1}

Recall that the reward of an action is the sum of the weights of the intervals that comprise that action.

We prove the theorem by induction. Assume at least one initial $I_n$ has a positive weight (otherwise the optimal action is to do no sensing). For $N=1$ initial interval, which therefore has a positive weight, AS-IM simply returns this interval, which is optimal. For $N=2$ initial intervals, with one positive weight, AS-IM returns the postitively-weighted interval, which is the optimal action. Now, assuming AS-IM returns the optimal action for $N\ge 1$, we prove that AS-IM returns the optimal action for $N+2$ initial intervals. The result follows by induction.

Given ${\mathcal{I}}=\{{I}_n\}_{n=1}^{N+2}$, if the number of intervals in $\mathcal{I}$ with positive weight is not bigger than $U$, AS-IM returns all such intervals. This is the optimal action since all bins with positive reward can be covered without incurring the cost of any bins with negative reward; any other action either omits a positive-reward bin, or includes a negative-reward bin.

Similarly, consider the situation in which no interval satisfies the merging condition. Suppose that the optimal action $A^*$ places a sensor on a sequence of intervals $I_{m}\cup\cdots\cup I_n$ with $n>m$. Clearly we must have $w(I_m)>0$ and $w(I_n)>0$ since otherwise the total weight could be increased by omitting the negatively-weighted end interval. But the fact that no interval can be merged implies that either $|w(I_{m+1})|>|w(I_m)|$ or $|w(I_{n-1})|>|w(I_n)|$. Hence removing either $I_m\cup I_{m+1}$ or $I_{n-1}\cup I_n$ from the sensor will improve the total weight. It follows that, under $A^*$, each sensor is allocated to a single interval, and allocating to the $U$ highest-weight intervals, as specified by AS-IM, maximises the reward.

Now, assume that at least one interval is merged in AS-IM. Let $I_n$ be the interval which minimises $|w(I_n)|$ and so is the first interval which is merged with its neighbours in AS-IM into a single interval $\tilde{I}_n = I_{n-1} \cup I_n \cup I_{n+1}$. Let $\tilde{A}^*$ be AS-IM's solution for the set of intervals $\tilde{\mathcal{I}}=\{I_1,\cdots, I_{n-2}, \tilde{I}_n, I_{n+2},\cdots, I_{N+2}\}$. By induction, $\tilde{A}^*$ is optimal for $\tilde{I}$. We prove that $A^*$, the optimal solution for $\mathcal{I}$, is equal to $\tilde{A}^*$. To prove this, we consider different cases based on the sign of $w(I_n)$.

\paragraph{Case 1: $w(I_n)<0$.} First note that the optimal solution cannot include only one neighbour of $I_n$. If $I_{n-1}$ were included but $I_{n+1}$ were not, we could add both $I_n$ and $I_{n+1}$ and increase the overall weight (since $I_n$ has the smallest absolute weight). Similarly, $A^*$ can not include both $I_{n-1}$ and $I_{n+1}$ but not $I_n$; if so then $A^*$ could be improved by (i) using a single sensor in place of the two that cover $I_{n-1}$ and $I_{n+1}$, adding $I_n$ to $A^*$, and (ii) redeploying the sensor we have saved to either split one existing sensor by removing a negative-weight $I_m$ with $|w(I_m)|>|w(I_n)|$, or adding a new positive-weight $I_m$ with $|w(I_m)|>|w(I_n)|$. The net outcome is an improved total weight. We have shown that $A^*$ includes either all or none of $I_{n-1}\cup I_n\cup I_{n+1}$. Since $A^*$ is optimal for $\mathcal{I}$, and the restriction to $\tilde{\mathcal{I}}$ does not prevent AS-IM from finding this optimal $A^*$, it follows that $\tilde{A}^*=A^*$.

\paragraph{Case 2: $w(I_n)>0$.} Under the optimal solution $A^*$, a sensor cannot have a negative-weighted interval as an end interval, since dropping the negative-weight interval only increases the total weight. Furthermore, a sensor cannot include $I_n$ as an end interval of a series of intervals, since then the total weight could be improved by stopping sensing both $I_n$ and its sensed neighbour. Thus if $I_n$ is included in $A^*$ then either a sensor is observing only $I_n$, or a single sensor observes all of $I_{n-2}\cup I_{n-1}\cup I_n \cup I_{n+1}\cup I_{n+2}$. As in Case 1, if a sensor is observing only $I_n$ we can improve on $A^*$ by redeploying this sensor to either sense a better interval, or stop sensing an interval which has a higher negative weight than is lost by stopping sensing $I_n$. So again, under $A^*$, $I_n$ is either sensed with all its neighbours, or none of them are sensed. The same logic as in Case 1 ensures $\tilde{A}^*=A^*$.

\paragraph{Complexity:}
AS-IM requires sorting the $N$ initial intervals. Noticing that there are at most $N$ mergings, and assuming constant complexity for each merging, AS-IM offers an $O(N\log N)$ sample complexity. Since $N\le K_t$, AS-IM has a sample complexity not bigger than $O(K_t\log K_t)$.

\section{Discretisation error under linear and cubic root rates}
The effect of the different rates on the unavoidable discretisation error is depicted in Figure~\ref{fig.syn.rebinrates_inst_regret}. The regret for the linear rate is reduced at a faster rate than for the cubic root rate as the number of bins is increased at a much faster rate. However as we show in the main paper (Section 5.1) the other part of the regret due to error in action selection from the model forecast is much higher under the linear regret rate.

\begin{figure}[htbp]
\centering
\includegraphics[width=0.4\columnwidth]{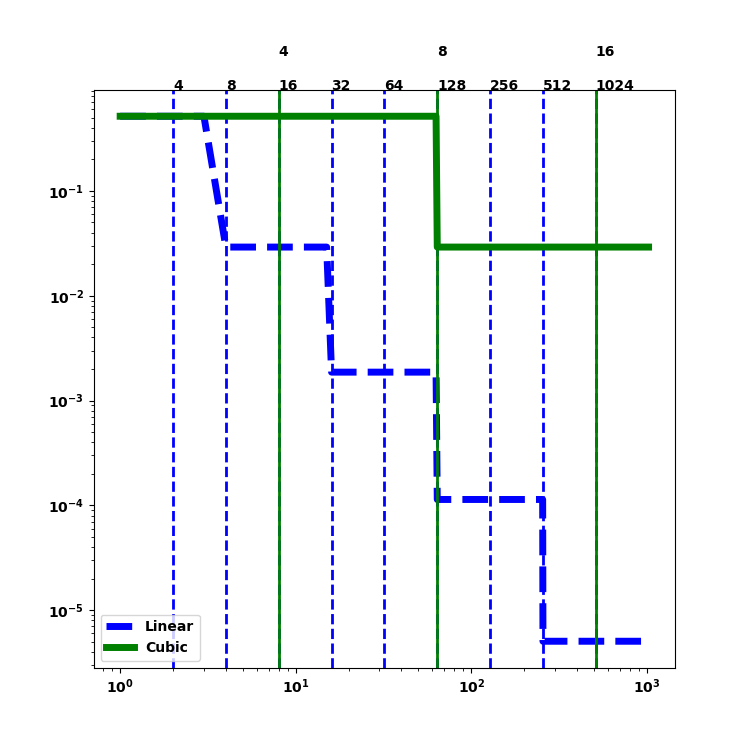}
 \caption{Instantaneous regret comparing linear and cube root rebinning rates. The vertical lines depict the rebinning times for the two different rate schedules. The time step (horizontal axis) and the regret (vertical axis) are both on a log scale. The number of bins for each rebinning rate are shown on the top horizontal axis. }\label{fig.syn.rebinrates_inst_regret}
\end{figure}

\section{Baselines used in the empirical study}
In the paper we have compared the TS approach other approaches which we now describe in more details.

\begin{enumerate}
    \item {\it UCB} approach, which is based on the FP-CUCB algorithm of~\cite{Grant2018} and requires the specification of an upper bound on the rate which we fix to the correct value in our experiments; in practise a conservative estimate is usually available. This is described in Algorithm \ref{UCBalg}.

\begin{algorithm}\label{UCBalg}
    \caption{UCB}
    \label{alg::UCB}
    \hrule
    \vspace{0.2cm}
    \textbf{Inputs:} Upper bound $\lambda_{\max} \geq \max_{x \in [0,1]} \lambda(x)$
    
    \textbf{Initialisation Phase:} For $t=1$ \begin{itemize}
        \item Select $A=[0,1]$
    \end{itemize}
    \textbf{Iterative Phase:} For $t\geq 2$
    \begin{itemize}
        \item For each $k \in \{1, \ldots, K_t\}$, evaluate $H_{k,t}(t-1)$ and $N_{k,t}(t-1)$ and calculate an index \begin{displaymath}
		\bar{\psi}_{k,t}= \frac{H_{k,t}(t-1)}{\Delta_tN_{k,t}(t-1)}+\frac{2\log(t)}{\Delta_t N_{k,t}(t-1)}+\sqrt{\frac{6\lambda_{\max}\log(t)}{\Delta_t N_{k,t}(t-1)}}.
		\end{displaymath}
		\item Choose an action $A_t$ that maximises $r(A)$ conditional on the true rate being given by the $\bar{\psi}_{k,t}$ values
		\item Observe the events in $A_t$
        \end{itemize}
        \hrule
        \vspace{0.2cm}
\end{algorithm}

\item A modified-UCB approach ({\it mUCB}) which has the same form as Algorithm 1 except $\lambda_{\max}$ is replaced with the empirical mean. Note this modification breaks the upper bound regret guarantee. 
The indices are :
\begin{displaymath}
		\bar{\psi}_{k,t}= \hat{\psi}_{k,t}(t-1)+\frac{2\log(t)}{\Delta_t N_{k,t}(t-1)}+\sqrt{\frac{6\hat{\psi}_{k,t}(t-1)\log(t)}{\Delta_t N_{k,t}(t-1)}}, \enspace k \in [K_t]
		\end{displaymath}
where $\hat{\psi}_{k,t}(t-1)=\frac{H_{k,t}(t-1)}{\Delta_tN_{k,t}(t-1)}$.
\item An {\it $\epsilon$-Greedy} approach where with probability $1-p_\epsilon$ an action $A_t$ is selected that maximises $r(A)$ conditional on the rate being given by the empirical mean values $\hat{\psi}_{k,t}$. With probability $p_\epsilon$, the action is instead chosen by sampling rates $\tilde\psi_{k,t}$ from independent $Gamma(\alpha,\beta)$ priors. In our experiments we fix $p_\epsilon=0.01$.
\end{enumerate}

\end{document}